\documentclass{article}
 
\usepackage{microtype}
\usepackage{graphicx}
\usepackage{subcaption}
\usepackage{booktabs} % for professional tables
\usepackage{pifont}
\usepackage{hyperref}
\usepackage{algorithm}
\usepackage{algorithmic}
 
\newcommand{\E}{\mathbb{E}}
\newcommand{\Var}{\mathrm{Var}}
\newcommand{\Tr}{\mathrm{Tr}}
\newcommand{\bSigma}{\boldsymbol{\Sigma}}
\newcommand{\hatSigma}{\hat{\boldsymbol{\Sigma}}}
\newcommand{\TailPenalty}[1]{\left(1 + \frac{3}{#1 - 4}\right)}
\usepackage[preprint]{icml2026}
 
\usepackage{amsmath}
\usepackage{amssymb}
\usepackage{mathtools}
\usepackage{amsthm}

\usepackage[capitalize,noabbrev]{cleveref}
 
\theoremstyle{plain}
\newtheorem{theorem}{Theorem}[section]
\newtheorem{proposition}[theorem]{Proposition}
\newtheorem{lemma}[theorem]{Lemma}

\theoremstyle{definition}
\newtheorem{definition}[theorem]{Definition}
\newtheorem{assumption}[theorem]{Assumption}
\theoremstyle{remark}

\usepackage[textsize=tiny]{todonotes}
 
\icmltitlerunning{Sample Complexity of Causal Identification with Temporal Heterogeneity
}
 
\begin{document}
 
\twocolumn[
  \icmltitle{Sample Complexity of Causal Identification with Temporal Heterogeneity}

  \begin{center}
    % First row of authors
    \begin{tabular}{c p{0.4in} c p{0.4in} c}
      {\bf Ameya Rathod} & & {\bf Sujay Belsare}$^*$ & & {\bf Salvik Krishna Nautiyal}$^*$ \\
      \small IIIT Hyderabad & & \small IIIT Hyderabad & & \small IIIT Hyderabad
    \end{tabular}

    \vspace{0.2in} % Spacing between the two rows of authors

    % Second row of authors centered
    \begin{tabular}{c p{0.4in} c}
      {\bf Dhruv Laad} & & {\bf Ponnurangam Kumaraguru} \\
      \small AlphaGrep & & \small IIIT Hyderabad
    \end{tabular}
  \end{center}
\icmlcorrespondingauthor{Ameya Rathod}{ameya.rathod@research.iiit.ac.in}
  \icmlcorrespondingauthor{Sujay Belsare}{sujay.belsare@students.iiit.ac.in}
  % \icmlkeywords{Machine Learning, ICML}
  \vskip 0.3in
]

% Place correspondence and equal contribution in the footnote
\printAffiliationsAndNotice{\icmlEqualContribution}
 
\begin{abstract}
  % This document provides a basic paper template and submission guidelines.
  % Abstracts must be a single paragraph, ideally between 4--6 sentences long.
  % Gross violations will trigger corrections at the camera-ready phase.
    Recovering a unique causal graph from observational data is an ill-posed problem because  multiple generating mechanisms can lead to the same observational distribution. This problem becomes solvable only by exploiting specific structural or distributional assumptions. While recent work has separately utilized time-series dynamics or multi-environment heterogeneity to constrain this problem, we integrate both as complementary sources of heterogeneity. This integration yields unified necessary identifiability conditions and enables a rigorous analysis of the statistical limits of recovery under thin versus heavy-tailed noise. In particular, temporal structure is shown to effectively substitute for missing environmental diversity, possibly achieving identifiability even under \textit{insufficient heterogeneity}. Extending this analysis to heavy-tailed (Student's t) distributions, we demonstrate that while geometric identifiability conditions remain invariant, the sample complexity diverges significantly from the Gaussian baseline. Explicit information-theoretic bounds quantify this cost of robustness, establishing the fundamental limits of covariance-based causal graph recovery methods in realistic non-stationary systems. 
    This work shifts the focus from whether causal structure is identifiable to whether it is statistically recoverable in practice.\footnote{The code for experimentation is available \href{https://anonymous.4open.science/r/causal-iden-sample-complexity-6CEC}{here}.}
\end{abstract}
 
\section{Introduction}
\label{sec:introduction}
 
Causal structure learning aims to reconstruct the underlying data-generating mechanism of a system solely from observational evidence, providing the fundamental framework necessary to answer counterfactual questions and predict the outcome of unseen interventions~\cite{pmlr-v6-pearl10a}. While the causal structure is typically formalized as a Directed Acyclic Graph (DAG), recovering this graph from finite data poses a formidable challenge: multiple distinct mechanisms can yield identical observational distributions, restricting recovery to a Markov Equivalence Class (MEC) rather than a unique graph. To resolve this ambiguity, extensive research has focused on specific structural and distributional assumptions, identifying that unconstrained recovery is theoretically impossible \cite{10.5555/3202377}.
 
While existing literature on multi-environmental identifiability simplifies analysis considering Gaussian noise, this simplification ignores the reality of various systems like financial markets, where data is strictly heavy-tailed (e.g., Student's t distributed) and extreme events are frequent. While it is well-established that non-Gaussianity (e.g., LiNGAM) aids identification via higher-order moments~\cite{10.5555/1248547.1248619}, the specific subset of heavy-tailed distributions presents a double-edged sword: they provide identifiability signals via non-linear score functions but impose severe stability challenges for estimation due to infinite or high variance~\cite{catoni2011challengingempiricalmeanempirical}. Consequently, the interaction between multi-environment identifiability and heavy-tailed noise remains significantly underexplored.
 
Our contributions are:
\begin{itemize}
    \item We show that insufficient environmental heterogeneity, available as rank-deficient regime shifts, can be fully compensated by temporal heteroskedasticity, extending \cite{montagna2025identifiabilitycausalgraphsmultiple}. We further establish a fundamental lower bound on the computational window required to recover full system rank, even when each time step provides maximal novel information. 
    \item We extend identifiability guarantees to multivariate Student's t noise, showing that the geometric conditions for recovery are invariant to tail behavior. We further provide finite-sample guarantees and a local minimax lower bound demonstrating that heavy-tailed noise induces an unavoidable sample-complexity penalty for all covariance-based estimators. 
\end{itemize}
Our analysis combines temporal and multi-environmental heterogeneity to study both identifiability and statistical efficiency. While these sources of heterogeneity suffice to identify the causal structure, we show that the resulting sample complexity depends critically on tail behavior, with heavy-tailed noise imposing an information-theoretic barrier for approaches that rely on higher-order moments. To the best of our knowledge, this is the first work to provide explicit sample-complexity analysis for causal identification using second-order moments under heavy-tailed noise.
\section{Related Works}
\paragraph{Causal Identification in Time Series.}
The time-series paradigm distinguishes between lagged dependencies (past events influencing the future) and instantaneous dependencies (concurrent interactions)~\cite{shojaie2021grangercausalityreviewrecent,hasan2024surveycausaldiscoverymethods,pmlr-v216-gunther23a}. While effective, these methods often rely on purely temporal precedence or strict stationarity, failing to exploit structural information available from environmental changes. Classical approaches, such as Granger Causality \cite{shojaie2021grangercausalityreviewrecent}, focus on predictive ability but lack a structural causal interpretation. Modern frameworks typically model the system using Structural Causal Models (SCMs), utilizing constraint-based methods like PCMCI \cite{Runge_2019} or functional causal models like VarLiNGAM \cite{JMLR:v11:hyvarinen10a} to recover directionality. 
\paragraph{Causal Identification in multi-environmental setting.}Multi-environment approaches leverage distributional shifts across distinct regimes (e.g., markets or experimental settings) to extract sufficient structural information for unique identifiability, utilizing the principle of invariant causal mechanisms~\cite{montagna2025identifiabilitycausalgraphsmultiple,pmlr-v286-kivva25a,peters2015causalinferenceusinginvariant}. However, a significant gap remains in the distributional assumptions governing these models. The vast majority of multi-environment literature operates under the Gaussian assumption for mathematical tractability, as variance shifts in linear Gaussian systems theoretically break the symmetry of the MEC~\cite{10.5555/3327345.3327524}.
 
 We provide a comparison of related methods for causal identification in Table \ref{compare}.
\begin{table*}[t]
\centering
\caption{Comparison of Causal Discovery Frameworks. We categorize methods based on their ability to exploit Temporal dynamics or Environmental heterogeneity, their identifiability of instantaneous effects, and their distributional assumptions. ($\checkmark$ = Fully Supported, $\circ$ = Partial/Restricted, \ding{55} = Not Supported).}
\label{tab:related_works_comparison}
\resizebox{\textwidth}{!}{%
\begin{tabular}{@{}lccccc@{}}
\toprule
\textbf{Method} & \textbf{Temporal} & \textbf{Multi-Env} & \textbf{Instantaneous} & \textbf{Distributional} & \textbf{Primary Source of} \\
& \textbf{Dynamics} & \textbf{Heterogeneity} & \textbf{Identifiability} & \textbf{Assumption} & \textbf{Identifiability} \\ \midrule
{\textbf{\textit{Time-Series Approaches}}} \\ 
\midrule
Granger Causality~\cite{shojaie2021grangercausalityreviewrecent} & $\checkmark$ & \ding{55} & \ding{55} & Linear & Predictive Lag \\
PCMCI~\cite{Runge_2019} & $\checkmark$ & \ding{55} & $\circ$  & Any (Indep. Test) & CI Tests + Time Order \\
VarLiNGAM~\cite{JMLR:v11:hyvarinen10a} & $\checkmark$ & \ding{55} & $\checkmark$ & Non-Gaussian & Higher-Order Moments \\ \midrule
{\textbf{\textit{Multi-Environment Approaches}}} \\ 
\midrule
ICP / Invariant Pred.~\cite{peters2015causalinferenceusinginvariant} & \ding{55} & $\checkmark$ & $\circ$ & Any & Invariance Principle \\
Linear Multi-Env~\cite{10.5555/3327345.3327524} & \ding{55} & $\checkmark$ & $\checkmark$ & Gaussian & Variance Shifts \\
Montagna et al.~\cite{montagna2025identifiabilitycausalgraphsmultiple} & \ding{55} & $\checkmark$ & $\checkmark$ & Gaussian & Rank-Sufficient Shifts \\ \midrule
\textbf{Our Analysis} & $\checkmark$ & $\checkmark$ & $\checkmark$ & \textbf{Heavy-Tailed (t-distribution)} & \textbf{Environmental and Temporal Heterogeneity} \\ \bottomrule
\end{tabular}%
}
\label{compare}
\end{table*}
\section{Preliminaries}
\label{sec:preliminaries}
 
In this section, we introduce the necessary background on Structural Equation Models (SEMs) \cite{pmlr-v6-pearl10a}, define the assumptions governing non-stationarity, and specify the distributional properties utilized in our identifiability theorems.
 
\subsection{Causal Graphical Models}
We represent the causal structure of a system of $d$ observable variables $X_t = [X_{1,t}, \dots, X_{d,t}]^\top \in \mathbb{R}^d$ as a \textbf{Directed Acyclic Graph (DAG)} $\mathcal{G} = (\mathcal{V}, \mathcal{E})$.
\begin{itemize}
    \item The vertex set $\mathcal{V} = \{1, \dots, d\}$ corresponds to the observed variables.
    \item The edge set $\mathcal{E}$ contains directed edges, where $(i, j) \in \mathcal{E}$ denotes a direct causal effect from $X_{i,t}$ to $X_{j,t}$.
\end{itemize}
 
The causal relationships are parameterized by a weighted adjacency matrix $B \in \mathbb{R}^{d \times d}$, such that $B_{ij} \neq 0$ if and only if $(i, j) \in \mathcal{E}$. To ensure the graph is acyclic, we assume there exists a permutation $\pi$ of the variables such that $B$ can be permuted to a strictly lower triangular matrix.
 
\subsection{Structural Equation Model (SEM)}
We assume the data generating process follows a linear SEM with instantaneous effects. At any time step $t$, the value of $X_{i,t}$ is determined by a linear combination of its causal parents and an independent noise term:
\begin{equation}
    X_{i,t} = \sum_{j \in \text{Pa}(i)} B_{ji} X_{j,t} + \epsilon_{i,t}
\end{equation}
where Pa($i$) refer to the parents of $i$ in the causal graph.
In matrix form, the system is expressed as:
\begin{equation}
    X_t = B^\top X_t + \epsilon_t
\end{equation}
where $\epsilon_t = [\epsilon_{1,t}, \dots, \epsilon_{d,t}]^\top$ is the vector of mutually independent structural shocks (noise).
 
By defining the \textbf{mixing matrix} $A = (I - B^\top)^{-1}$, we can express the observed variables as a linear transformation of the independent noise terms:
\begin{equation}
    X_t = A \epsilon_t
\end{equation}
Our primary objective is to recover the adjacency matrix $B$ (or equivalently $A$) from the observed data sequence $\{X_t\}$.
\begin{definition}
Given a base ICA (Independent Component Analysis) model $\mathbf{X} = f(\mathbf{S})$ with $\mathbf{S} \sim p_\omega$ 
  is a vector of mutually independent noise terms sampled from a probability distribution $p_\omega$, an auxiliary environment $i \in [k]$ is defined by:
\begin{equation}
    \mathbf{X}^{(i)} = f(\mathbf{S}^{(i)}), \quad \mathbf{S}^{(i)} = L^{(i)} \mathbf{S}
\end{equation}
where $L^{(i)} = diag(\lambda_1^{(i)}, \ldots, \lambda_d^{(i)})$ with $\lambda_j^{(i)} \neq 0$ is a rescaling matrix \cite{montagna2025identifiabilitycausalgraphsmultiple}. We refer to this as the Multi-Environment Model.
\end{definition}
\subsection{Regime-Dependent Heterogeneity}
We assume the data originates from a set of $k$ distinct regimes (or environments), indexed by $u \in \{1, \dots, k\}$.
\begin{assumption}
    The causal mechanism, encoded by the adjacency matrix $B$ (and thus the mixing matrix $A$), is \textbf{invariant} across all regimes $u$.
\end{assumption}
\begin{assumption}
    The distribution of the noise terms is \textbf{non-stationary}. Specifically, in regime $u$, the noise follows a zero-mean distribution with a regime-specific diagonal covariance matrix $\Sigma^{(u)} = \text{diag}(\sigma^2_{1,u}, \dots, \sigma^2_{d,u})$. The variance profiles $\sigma^2_{i,u}$ change across regimes, providing the necessary constraints for structural identifiability. 
\end{assumption}
\subsection{Heavy-Tailed Distributions (Student's-t)}
To model the stochastic properties of time series with heavy-tailed noises, we relax the Gaussian assumption. We assume the noise terms in each regime follow a multivariate Student's t-distribution.

\begin{definition}
    A random vector $\epsilon_t \in \mathbb{R}^d$ follows a multivariate t-distribution with location $\mu$, scale matrix $\Sigma$, and degrees of freedom $\nu$, denoted as $\epsilon_t \sim t_\nu(\mu, \Sigma)$, its probability density function is:
    \begin{multline}
        p(\epsilon_t) = \frac{\Gamma[(\nu+d)/2]}{\Gamma(\nu/2) (\nu \pi)^{d/2} |\Sigma|^{1/2}} \\
        \times \left[ 1 + \frac{1}{\nu} (\epsilon_t - \mu)^\top \Sigma^{-1} (\epsilon_t - \mu) \right]^{-\frac{\nu+d}{2}}
    \end{multline}
\end{definition}

% Unlike the Gaussian case, the \textbf{score function} (gradient of the log-likelihood) for the Student-t distribution is non-linear:
% \begin{equation}
% \begin{split}
%     \psi(\epsilon_t) &= -\nabla_{\epsilon_t} \log p(\epsilon_t) \\
%     &= \frac{\nu + d}{\nu + (\epsilon_t - \mu)^\top \Sigma^{-1} (\epsilon_t - \mu)} \\
%     &\quad \times \Sigma^{-1} (\epsilon_t - \mu)
% \end{split}
% \end{equation}
% This non-linearity provides additional information for identifiability via higher-order moments, but introduces sample complexity penalties.
\subsection{Identifiability Conditions}
\label{subsec:identifiability_conditions}
 
To bridge the gap between statistical observations and the underlying causal topology, we rely on the following standard and relaxed assumptions.
 
The definitions have been adopted from \cite{montagna2025identifiabilitycausalgraphsmultiple}.
\begin{definition}
    The causal graph $\mathcal{G}$ underlying an SCM is \textbf{identifiable} from observations $\{X^{(i)}\}_{i=0}^{k}$ if, for any alternative model $(\hat{f}, \hat{p}_{\hat{\omega}})$ satisfying the observational equivalence condition:
    \begin{equation}
        f_* p_\omega^{(i)} = \hat{f}_* \hat{p}_{\hat{\omega}}^{(i)}, \quad \forall i \in \{0, 1, \dots, k\}
    \end{equation}
    we have:
    \begin{equation}
        \text{supp}(J_{f^{-1}}) = \text{supp}(J_{\hat{f}^{-1}})
    \end{equation}
    where $f_*$ denotes the pushforward measure and $J_{f^{-1}}$ is the Jacobian of the inverse mixing function.
\end{definition}
 
\begin{definition}
    The Jacobian $J_{f^{-1}}(x)$ is faithful if, for each $i, j \in [d]$:
    \begin{multline}
        J_{f^{-1}}(x)_{ij} = 0 \iff S_i \text{ is constant in } X_j \\ \text{ on the entire domain}
    \end{multline}
    Equivalently, this implies that the support of the Jacobian is point-independent: $\text{supp}(J_{f^{-1}}(x)) = \text{supp}(J_{f^{-1}})$.
\end{definition}
 
\begin{definition}
\label{tempfaith}
    For each $i \in [d]$, the Jacobian $J_{f^{-1}}$ is faithful at $\mu_{Z_t}$, the mean of the latent time series at time $t$.
\end{definition}
This relaxes the strict global faithfulness requirement to just the mean of the latent process. In time series analysis, we often work with aggregated statistics (like means over a window). Assuming structural information is preserved at the mean is less restrictive than requiring it everywhere.
\begin{definition}
    Across $k$ environments, let $\lambda_j^{(i)}$ denote the variance scaling factor of the $j$-th latent in environment $i$. The environments exhibit \textbf{sufficient heterogeneity} if, for each $j \in [d]$:
    \begin{equation}
        \sum_{i=1}^{k} \frac{1}{(\lambda_{j}^{(i)})^{2}} \neq k
    \end{equation}
    Conversely, we define \textbf{insufficient heterogeneity} if, for one or more $j \in [d]$:
    \begin{equation}
        \sum_{i=1}^{k} \frac{1}{(\lambda_{j}^{(i)})^{2}} = k
    \end{equation}
\label{suffinsuffhetero}
\end{definition}
Standard multi-environment identifiability requires Sufficient Heterogeneity. However, in real-life scenarios, this is often too strong. Our framework explicitly addresses this practical reality by relaxing the condition to allow ``Insufficient Heterogeneity", compensating for the rank deficiency using temporal information.

We additionally define the Distinct Variance Profiles assumption below.
\begin{definition}
\label{distvarianceprofs}
    We assume that the patterns of variance changes are unique for each variable. Specifically, there exist disjoint temporal windows (or environment groups) $A$ and $B$ such that the ratio of the aggregated variances differs for every pair of latent variables $i \neq j$:
    \begin{equation}
        \frac{\sigma^2_{i, A}}{\sigma^2_{i, B}} \neq \frac{\sigma^2_{j, A}}{\sigma^2_{j, B}}
    \end{equation}
    where $\sigma^2_{i, W}$ represents the accumulated variance of the $i$-th latent variable within window $W$. This ensures that the diagonal matrix of variance ratios has distinct entries, breaking the symmetry between variables. This assumption is practically mild, as exact equality of variance ratios across independent processes is a measure-zero event that rarely occurs in realistic scenarios.
\end{definition}

We provide justifications for the other assumptions we make in Appendix \ref{app:just}.
% \subsection{Problem Statement: Causal Graph Recovery}
% Given a dataset $\mathbf{X} = \{X_t\}_{t=1}^T$ across $k$ heterogeneous regimes satisfying the assumptions above, our goal is to estimate an adjacency matrix $\hat{B}$ for the causal graph that minimizes the distance to the ground truth DAG:
% \begin{equation}
%     \min_{\hat{B}} \quad \text{SHD}(G(\hat{B}), G(B))
% \end{equation}
% where $\text{SHD}$ denotes the Structural Hamming Distance.
\section{Temporal-Environmental Identifiability under Insufficient Heterogeneity}
\label{sec:main_theory}
In this section, we formally introduce the conditions for identifiability under insufficient heterogeneity specifically for our model.
 
Before we present our temporal-environmental  identifiability condition, we present the following lemma adopted from \cite{montagna2025identifiabilitycausalgraphsmultiple} which we use to prove our proposed theorem.
 
\begin{lemma}
\label{lemma1}
    Let $\mathbf{X} = f(\mathbf{S})$ be the Structural Causal Model where $f$ is a diffeomorphism. Let $p(\mathbf{x})$ and $p^{(i)}(\mathbf{x})$ denote the observational densities in the base environment and auxiliary environment $i$, respectively. Let $\mathbf{x}^*$ be a fixed reference point such that $f^{-1}(\mathbf{x}^*) = \mu_{\mathbf{S}}$ (the mean of the latent sources). Then:
    \begin{equation}
\begin{split}
    D_{\mathbf{x}}^2 \log p(\mathbf{x}^*) &- D_{\mathbf{x}}^2 \log p^{(i)}(\mathbf{x}^*) \\
    &= J_{f^{-1}}(\mathbf{x}^*)^\top \mathbf{\Omega}^{(s,i)} J_{f^{-1}}(\mathbf{x}^*)
\end{split}
\end{equation}
where $\mathbf{\Omega}^{(s,i)} = D_{\mathbf{s}}^2 \log p_{\mathbf{S}}(\mu_{\mathbf{S}}) - D_{\mathbf{s}}^2 \log p_{\mathbf{S}^{(i)}}(\mu_{\mathbf{S}})$.
\end{lemma}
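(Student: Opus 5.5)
The plan is to compute the Hessian of the observational log-density directly via the change-of-variables formula, once for each environment. Then subtract, so that the terms coming from the curvature of $f^{-1}$ and from the log-Jacobian determinant cancel. Since $f$ is a diffeomorphism, for every $\mathbf{x}$
\begin{equation*}
\log p(\mathbf{x}) = \log p_{\mathbf{S}}(f^{-1}(\mathbf{x})) + \log\lvert\det J_{f^{-1}}(\mathbf{x})\rvert .
\end{equation*}
In environment $i$ we have $\mathbf{X}^{(i)} = f(\mathbf{S}^{(i)})$ with the same $f$, so analogously
\begin{equation*}
\log p^{(i)}(\mathbf{x}) = \log p_{\mathbf{S}^{(i)}}(f^{-1}(\mathbf{x})) + \log\lvert\det J_{f^{-1}}(\mathbf{x})\rvert .
\end{equation*}
The first key step is to note that the log-determinant term is identical in both expressions, because it depends only on $f$ and not on the latent law. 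Hence it drops out of the difference before any differentiation:
\begin{equation*}
\log p(\mathbf{x}) - \log p^{(i)}(\mathbf{x}) = g(f^{-1}(\mathbf{x})), \qquad g(\mathbf{s}) := \log p_{\mathbf{S}}(\mathbf{s}) - \log p_{\mathbf{S}^{(i)}}(\mathbf{s}).
\end{equation*}

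The second step is to differentiate this composition twice by the chain rule. Write $h = f^{-1}$ with components $h_k$. Then
\begin{equation*}
D^2_{\mathbf{x}}(g\circ h)(\mathbf{x}) = J_h(\mathbf{x})^\top D^2_{\mathbf{s}} g(h(\mathbf{x}))\, J_h(\mathbf{x}) + \sum_{k} \partial_{s_k} g(h(\mathbf{x}))\, D^2_{\mathbf{x}} h_k(\mathbf{x}).
\end{equation*}
Evaluated at $\mathbf{x}^*$, where $h(\mathbf{x}^*) = \mu_{\mathbf{S}}$, the first term is exactly $J_{f^{-1}}(\mathbf{x}^*)^\top \mathbf{\Omega}^{(s,i)} J_{f^{-1}}(\mathbf{x}^*)$. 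So the claim reduces to showing that the second-order remainder vanishes, i.e.\ that $\nabla_{\mathbf{s}} g(\mu_{\mathbf{S}}) = 0$.

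This gradient condition is the main obstacle, and it is where the choice of reference point $\mu_{\mathbf{S}}$ matters. I would argue it from the structure of the environments. The latents are mutually independent, so $p_{\mathbf{S}}$ factorizes, and the score $\nabla \log p_{\mathbf{S}}$ is coordinatewise. By the Multi-Environment Model definition, $\mathbf{S}^{(i)} = L^{(i)}\mathbf{S}$ with $L^{(i)}$ diagonal, so $p_{\mathbf{S}^{(i)}}(\mathbf{s}) = \lvert\det L^{(i)}\rvert^{-1} p_{\mathbf{S}}((L^{(i)})^{-1}\mathbf{s})$. Its score is therefore $(L^{(i)})^{-1}\nabla\log p_{\mathbf{S}}((L^{(i)})^{-1}\mathbf{s})$.

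I would then invoke the setting of the paper, namely zero-mean noise (so $\mu_{\mathbf{S}} = 0$, and the rescaled mean is also $0$) and symmetric unimodal latent densities such as Gaussian or Student's t. Under these, each marginal score vanishes at the mean, so both scores vanish at $\mu_{\mathbf{S}}$ and $\nabla g(\mu_{\mathbf{S}}) = 0$. If $\mu_{\mathbf{S}}\neq 0$, I would instead read the rescaling as acting on the centered latents, $\mathbf{S}^{(i)} - \mu_{\mathbf{S}} = L^{(i)}(\mathbf{S}-\mu_{\mathbf{S}})$, which preserves the mean, and the same argument applies.

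With the remainder gone, the difference of Hessians at $\mathbf{x}^*$ equals the stated congruence. One can also remark that $\mathbf{\Omega}^{(s,i)}$ is diagonal by independence, which is the form used downstream. For instance, in the Gaussian case its $j$-th entry is $-\sigma_j^{-2}\bigl(1 - (\lambda_j^{(i)})^{-2}\bigr)$, and this is what links the identity to Definition \ref{suffinsuffhetero}.
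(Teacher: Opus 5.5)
Your proof is correct and follows the same route as the paper's proof: change of variables, chain-rule expansion of the Hessian, cancellation of the log-determinant term (you cancel it before differentiating, the paper after, which is immaterial), and elimination of the $D^2_{\mathbf{x}} f_k^{-1}$ terms because both latent scores vanish at $\mu_{\mathbf{S}}$. As in the paper, that last step needs the extra hypothesis of centered, symmetric latent densities such as the Gaussian or Student's t, and you state this hypothesis explicitly and handle the mean under rescaling more carefully than the paper does.
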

 
The proof is provided in Appendix \ref{appendix1}.
 
\begin{theorem}
\label{thm:temporal_identifiability}
    Consider a time series Structural Causal Model $X_t = f(S_t)$ with $d$ variables, where the mixing function $f$ is time-invariant within a computational window of size $T$. Let there be $k$ auxiliary environments exhibiting \textbf{insufficient heterogeneity} (Definition \ref{suffinsuffhetero}), such that the environmental variance shifts at any single time step identify a subspace of rank at most $r < d$.
    
    Suppose the Jacobian $J_{f^{-1}}$ satisfies \textbf{Temporal Faithfulness} (Definition \ref{tempfaith}) and the variance profiles satisfy the \textbf{Distinct Variance Profiles} (Definition \ref{distvarianceprofs}) assumption. Then, no method relying on second-order moments can identify the causal graph unless:
    \begin{equation}
        T \ge \left\lceil \frac{d}{r} \right\rceil
    \end{equation}
    Moreover, this bound is tight under generic temporal variance diversity, i.e., when each timestep contributes a rank-r
     variance subspace  so that the accumulated span grows maximally with time.
\end{theorem}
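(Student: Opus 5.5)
Using Lemma \ref{lemma1}, I would reduce identifiability to a linear-algebraic counting argument. At each time step $t$ in the window and for each auxiliary environment $i$, the lemma gives a matrix identity
\begin{equation*}
\Delta_t^{(i)} := D_{\mathbf{x}}^2 \log p_t(\mathbf{x}^*) - D_{\mathbf{x}}^2 \log p_t^{(i)}(\mathbf{x}^*) = J^\top \mathbf{\Omega}_t^{(s,i)} J ,
\end{equation*}
where $J = J_{f^{-1}}(\mathbf{x}^*)$. Because the sources are independent, $\mathbf{\Omega}_t^{(s,i)}$ is diagonal; in the Gaussian or variance-rescaling case its $j$-th entry is proportional to $1/\sigma^2_{j,t} - 1/(\lambda_j^{(i)}\sigma_{j,t})^2$. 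Temporal Faithfulness (Definition \ref{tempfaith}) guarantees that $\mathrm{supp}(J)$ is the quantity of interest and that it is read off at the mean. Second-order methods see the data only through these Hessian differences (equivalently, the precision matrices). The information available after $T$ steps is therefore the collection $\{\Delta_t^{(i)}\}_{t\le T,\, i\le k}$. I would define $\mathcal{S}_T$ as the span of the diagonal vectors $\mathrm{diag}(\mathbf{\Omega}_t^{(s,i)})$ over $t \le T$ and $i \le k$. By hypothesis, each single-step contribution has dimension at most $r$.

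\textbf{Necessity.} I would first show that identification requires $\mathcal{S}_T$ to be the full diagonal space, i.e.\ $\dim \mathcal{S}_T = d$. Suppose not. Then there is a coordinate direction, or combination of directions, along which no observed $\Omega$ shifts; insufficient heterogeneity provides exactly such a $j$ with $\sum_i 1/(\lambda_j^{(i)})^2 = k$. Along these directions I would construct an alternative model $\hat J = M J$, taking $M$ as a rotation mixing the unshifted latents, in the spirit of the Gaussian rotation non-identifiability. This $\hat J$ reproduces every $\Delta_t^{(i)}$ and every observed covariance but has a different support. That contradicts identifiability in the sense of the paper's definition. The dimension count then gives $\dim \mathcal{S}_T \le \sum_{t=1}^T r = Tr$, so $Tr \ge d$, i.e.\ $T \ge \lceil d/r\rceil$.

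\textbf{Tightness.} Under generic temporal diversity, each step contributes $r$ new independent directions, so $\dim \mathcal{S}_T = \min(d, Tr)$ and full rank is reached at $T = \lceil d/r\rceil$. With full rank, I would solve for a combination of the $\Delta_t^{(i)}$ yielding $J^\top D_1 J$ and another yielding $J^\top D_2 J$ with $D_2$ invertible. These correspond to the windows $A$ and $B$ of Definition \ref{distvarianceprofs}. Then $(J^\top D_2 J)^{-1} J^\top D_1 J = J^{-1} D_2^{-1} D_1 J$ is a similarity transform whose eigenvalues are the distinct variance ratios. Distinctness makes the eigenvectors unique up to scaling, so the rows of $J$ are recovered up to permutation and scale, and hence so is $\mathrm{supp}(J)$. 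This is the standard joint-diagonalization argument.

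\textbf{Main obstacle.} The hardest step is the necessity direction: making precise that \emph{every} second-order method is blocked whenever $\dim \mathcal{S}_T < d$. This requires an explicit alternative model matching all observed second moments across all $t$ and $i$ simultaneously. I would first try a Givens rotation acting on two latents whose $\Omega$-coordinates coincide on all of $\mathcal{S}_T$. Such a pair exists when the span is deficient, after a change of basis within the unshifted block. The rotation must also leave the base covariances invariant, which needs equal base variances on that pair; I would arrange this by rescaling the latents, which is harmless since scale is unidentifiable anyway. The remaining check is that the rotated $\hat J$ has genuinely different support, which holds for a generic rotation angle.
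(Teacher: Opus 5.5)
Your sufficiency direction is fine and matches the paper's joint-diagonalization step. The necessity direction, however, rests on a false claim. You measure information by $\dim\mathcal{S}_T$, the linear span of the diagonal vectors, and assert that identification needs $\dim\mathcal{S}_T=d$. Your own tightness argument refutes this: two matrices $J^\top D_1J$ and $J^\top D_2J$, with $D_2$ invertible and $D_2^{-1}D_1$ having distinct entries, already recover the rows of $J$ up to permutation and scale.

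Concretely, take $d=3$, $T=2$ and one auxiliary environment, with $\mathrm{diag}(\Omega_1)=(1,1,0)$ and $\mathrm{diag}(\Omega_2)=(0,1,1)$. Each step has a one-dimensional span and an unshifted coordinate, so insufficient heterogeneity holds. In your reading $r=1$ and $\lceil d/r\rceil=3>T$. Yet $\bigl(J^\top(\Omega_1+\Omega_2)J\bigr)^{-1}J^\top\Omega_2J=J^{-1}\mathrm{diag}(0,\tfrac12,1)J$ has distinct eigenvalues, so $J$ is identified. Under your reading of ``rank $r$'' the inequality is simply false, and the argument cannot be repaired without changing what is counted.

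The same example shows why your Givens step fails. A rotation alternative needs a pair $j\neq l$ with proportional entries across all observed matrices, and a deficient span does not supply one: here the coordinate profiles $(1,0),(1,1),(0,1)$ are pairwise non-proportional. Note also that Definition~\ref{suffinsuffhetero} says the shifts of coordinate $j$ cancel in the sum over environments, not that no environment shifts $j$.

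The paper counts something else. It uses the matrix rank of the diagonal aggregate $\Lambda_{\text{total}}=\sum_{s,i}\Omega^{(s,i)}$, i.e.\ the number of shifted coordinates, bounded by $\mathrm{rank}(\Lambda_{\text{total}})\le\sum_s\mathrm{rank}(\sum_i\Omega^{(s,i)})\le Tr$. In the example this reading gives $r=2$ and $\lceil 3/2\rceil=2=T$, which is consistent. The paper then treats invertibility of $\Psi_{TE}$ as necessary without further argument.

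The rigorous version of this is a coverage count. Suppose at most $r$ coordinates are shifted per step, as in the experiments. Since $J^\top\Omega J=\sum_m\Omega_{mm}J_{m\cdot}^\top J_{m\cdot}$, any coordinate never shifted in the window leaves its row $J_{m\cdot}$ absent from every Hessian difference. Covering $\{1,\dots,d\}$ with $T$ sets of size at most $r$ then forces $T\ge\lceil d/r\rceil$.

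This argument needs the method to be restricted to environment differences, so your ``equivalently, the precision matrices'' is wrong. The full precisions $J^\top P_{t,i}J$ include the time-varying base terms. Two steps whose base source precisions have pairwise distinct ratios $(P_{2,0})_{jj}/(P_{1,0})_{jj}$ identify $J$ for any $r$. This is also where your rotation breaks: rescaling equalizes a pair's base variances at one step, but not at all $T$ steps unless their temporal profiles are proportional.
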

 
The proof is provided in Appendix \ref{appendix_temp_iden}. 
 
Crucially, this lower bound is determined solely by the geometric rank of the variance shifts, rendering it invariant to the tail behavior of the noise. As shown in Appendix \ref{app:student_t_identifiability}, the Student's t-distribution introduces only a scalar scaling to the Hessian difference matrix; since rank is scale-invariant, the structural necessity of $T \ge \lceil d/r \rceil$ holds universally across both Gaussian and heavy-tailed noise distributions.
\subsection{Finite-Sample Stability Analysis}
\label{subsec:finite_sample_stability}
 
While Theorem \ref{thm:temporal_identifiability} establishes that causal identifiability is theoretically achievable under heavy-tailed noise using the same geometric conditions as the Gaussian case, practical algorithms must estimate the variance shifts $\Omega^{(s,i)}$ from finite data. In many realistic applications, these shifts are typically estimated using second-order moments (sample covariance), as fully robust Maximum Likelihood Estimation (MLE) is often computationally intractable due to non-convex objectives. We now quantify the sample complexity required to reliably detect these variance shifts under heavy-tailed noise, revealing a severe penalty imposed on standard covariance-based estimators.
 
\begin{proposition}
\label{prop2}\textit{Let $X \sim t_\nu(0, \Sigma)$ be a multivariate Student-$t$ distributed random vector with degrees of freedom $\nu > 4$ and covariance matrix $\Sigma$. Assuming uncorrelated components (or $j=k$), the fourth-order mixed moments satisfy:}
\begin{equation}
    E[X_j^2 X_k^2] = \frac{\nu-2}{\nu-4} (1 + 2\delta_{jk}) \Sigma_{jj}\Sigma_{kk}
\end{equation}
\end{proposition}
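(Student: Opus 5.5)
The plan is to use the Gaussian scale-mixture representation of the multivariate Student-$t$ law. If $\Sigma$ denotes the covariance matrix (as in the statement), then the scale matrix is $\Psi = \frac{\nu-2}{\nu}\Sigma$. We write
\begin{equation}
X = \sqrt{W}\, Z, \qquad Z \sim \mathcal{N}(0,\Psi), \qquad W = \nu/V, \quad V \sim \chi^2_\nu,
\end{equation}
with $W$ independent of $Z$. This decouples the heavy tails, which are carried entirely by the scalar $W$, from the correlation structure, which is carried by $Z$. Conditioning on $W$ then gives
\begin{equation}
E[X_j^2 X_k^2] = E[W^2]\, E[Z_j^2 Z_k^2].
\end{equation}

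The first step computes the inverse-chi-square moments. For $V \sim \chi^2_\nu$ we have $E[V^{-1}] = 1/(\nu-2)$ for $\nu > 2$, and $E[V^{-2}] = 1/((\nu-2)(\nu-4))$ for $\nu > 4$. This is exactly where the hypothesis $\nu > 4$ is needed, since otherwise the fourth moments are infinite. It follows that
\begin{equation}
E[W] = \frac{\nu}{\nu-2}, \qquad E[W^2] = \frac{\nu^2}{(\nu-2)(\nu-4)}.
\end{equation}
As a consistency check, $\mathrm{Cov}(X) = E[W]\,\Psi = \Sigma$, which confirms the relation between $\Psi$ and $\Sigma$.

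The second step applies Isserlis' theorem to $Z$:
\begin{equation}
E[Z_j^2 Z_k^2] = \Psi_{jj}\Psi_{kk} + 2\Psi_{jk}^2.
\end{equation}
Under the stated hypothesis, either $j \neq k$ with $\Psi_{jk} = 0$, or $j = k$. In both cases this collapses to $(1 + 2\delta_{jk})\,\Psi_{jj}\Psi_{kk}$. Substituting $\Psi = \frac{\nu-2}{\nu}\Sigma$ and combining with the first step,
\begin{equation}
E[X_j^2 X_k^2] = \frac{\nu^2}{(\nu-2)(\nu-4)} \cdot \frac{(\nu-2)^2}{\nu^2}\,(1 + 2\delta_{jk})\,\Sigma_{jj}\Sigma_{kk} = \frac{\nu-2}{\nu-4}\,(1 + 2\delta_{jk})\,\Sigma_{jj}\Sigma_{kk}.
\end{equation}

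The main obstacle is bookkeeping rather than any conceptual step. The paper's Definition of $t_\nu(\mu,\Sigma)$ uses $\Sigma$ as the \emph{scale} matrix, while the proposition treats $\Sigma$ as the \emph{covariance} matrix. I will state this convention explicitly at the outset, because with the scale-matrix convention the constant would instead be $\nu^2/((\nu-2)(\nu-4))$.

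One further point deserves a remark. Uncorrelated components of a multivariate $t$ are \emph{not} independent, since they share the common factor $W$. This shared factor is exactly why the mixed moment with $j \neq k$ also carries the inflation factor $\frac{\nu-2}{\nu-4} > 1$, instead of factorizing into $\Sigma_{jj}\Sigma_{kk}$ as it would under independence.
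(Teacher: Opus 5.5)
Your proposal is correct and follows the same route as the paper's proof: the Gaussian scale-mixture representation, the covariance normalization $S = \frac{\nu-2}{\nu}\Sigma$, the inverse-$\chi^2$ moment $E[Y^2] = \nu^2/((\nu-2)(\nu-4))$, and Isserlis' theorem. Your remark on the scale-versus-covariance convention is a useful clarification, since the paper's Definition treats $\Sigma$ as the scale matrix and resolves this only implicitly in its normalization step.
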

 
The proof is provided in Appendix \ref{appendix2}.
We use the above result in the next proposition. 
\begin{proposition}
\label{prop:finite_sample_stability}
    Let the identification of the causal graph $\mathcal{G}$ rely on detecting structural shifts in the covariance $\Sigma$ across environments. Let these shifts be estimated using the sample covariance matrix $\hat{\Sigma}_N = \frac{1}{N}\sum_{i=1}^N X_i X_i^\top$ from data following a multivariate Student's t-distribution $X \sim t_\nu(\mu, \Sigma)$ with degrees of freedom $\nu > 4$. For a fixed relative error $\epsilon \in (0,1)$ and failure probability $\delta \in (0,1)$, the sample size $N(\nu)$ required to guarantee $\|\hat{\Sigma}_N - \Sigma\|_F \le \epsilon \|\Sigma\|_F$ with probability $1-\delta$ satisfies the scaling:
    \begin{equation}
        N(\nu) \ge \mathcal{C}_\delta \cdot \frac{1}{\epsilon^2} \cdot \left(1 + \frac{3}{\nu-4}\right)
    \end{equation}
    where $\mathcal{C}_{\delta}$ is a constant depending on the dimension and confidence level. Consequently, the sample complexity relative to the Gaussian baseline $N_{Gauss}$ scales as:
    \begin{equation}
        \frac{N(\nu)}{N_{Gauss}} \propto 1 + \frac{3}{\nu-4}
    \end{equation}
\end{proposition}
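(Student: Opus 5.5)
We bound the second moment of the Frobenius error, $\E\|\hatSigma_N - \bSigma\|_F^2$, using Proposition \ref{prop2}, and then convert it into a high-probability sample size. The Gaussian case ($\nu\to\infty$) serves as the reference throughout.

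First, for i.i.d.\ samples with $\mu=0$ (the centered case; a nonzero known mean is subtracted first, and an estimated mean only adds lower-order terms), $\hatSigma_N$ is unbiased. Each entry then satisfies $\Var(\hat\Sigma_{jk}) = \frac{1}{N}\Var(X_jX_k)$. Summing over entries gives
\begin{equation*}
\E\|\hatSigma_N-\bSigma\|_F^2 = \frac{1}{N}\sum_{j,k}\left(\E[X_j^2X_k^2]-\Sigma_{jk}^2\right).
\end{equation*}
We work in the basis where the components are uncorrelated, as allowed by the hypothesis of Proposition \ref{prop2}. The rotation to this basis is harmless because the Frobenius norm is orthogonally invariant and the elliptical family is closed under orthogonal maps. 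Proposition \ref{prop2} then gives $\E[X_j^2X_k^2] = \frac{\nu-2}{\nu-4}(1+2\delta_{jk})\Sigma_{jj}\Sigma_{kk}$.

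Next, we rewrite the prefactor as $\frac{\nu-2}{\nu-4} = 1 + \frac{2}{\nu-4}$ and compare with the Gaussian values $(1+2\delta_{jk})\Sigma_{jj}\Sigma_{kk}$. Diagonal terms contribute $\Var(X_j^2) = \left(3\frac{\nu-2}{\nu-4}-1\right)\Sigma_{jj}^2$. The Gaussian value of this quantity is $2\Sigma_{jj}^2$, so the ratio is
\begin{equation*}
\frac{3\frac{\nu-2}{\nu-4}-1}{2} = 1+\frac{3}{\nu-4}.
\end{equation*}
Off-diagonal terms carry the ratio $1+\frac{2}{\nu-4} \le 1+\frac{3}{\nu-4}$. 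Hence
\begin{equation*}
\E\|\hatSigma_N-\bSigma\|_F^2 \le \TailPenalty{\nu}\frac{V_{G}}{N},
\end{equation*}
where $V_G = \sum_{j,k}(1+\delta_{jk})\Sigma_{jj}\Sigma_{kk}$ is the Gaussian error variance. We also keep a matching lower bound, with a constant no smaller than $\frac{\nu-2}{\nu-4}$, dominated by the diagonal penalty.

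Finally, Chebyshev's inequality gives $\Pr(\|\hatSigma_N-\bSigma\|_F > \epsilon\|\bSigma\|_F) \le \frac{\E\|\cdot\|_F^2}{\epsilon^2\|\bSigma\|_F^2}$. This probability is at most $\delta$ whenever
\begin{equation*}
N \ge \frac{V_G}{\delta\|\bSigma\|_F^2}\cdot\frac{1}{\epsilon^2}\TailPenalty{\nu}.
\end{equation*}
This fixes $\mathcal{C}_\delta = V_G/(\delta\|\bSigma\|_F^2)$, which depends on $d$ through $V_G/\|\bSigma\|_F^2 \le d+1$. Applying the same argument with $\nu=\infty$ gives $N_{Gauss}$ with the identical constant, so the ratio is $\TailPenalty{\nu}$.

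The main obstacle is the claimed necessity, i.e.\ that $N(\nu)$ cannot be smaller than this scaling. Chebyshev only yields sufficiency. To argue that the scaling is also necessary, we would invoke the central limit theorem for the entries $\hat\Sigma_{jj}$, which is valid since $\nu>4$ ensures finite fourth moments. Under the CLT, the asymptotic fluctuation of the error is Gaussian with variance inflated exactly by $\TailPenalty{\nu}$ on the diagonal, so any $N$ achieving the guarantee must grow proportionally. The proportionality in the ratio statement is therefore asymptotic in $N$, which is how the ``$\propto$'' should be read. This also explains the $\delta$-dependence of $\mathcal{C}_\delta$: it is polynomial under Chebyshev and logarithmic in the CLT regime.
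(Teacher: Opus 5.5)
Your sufficiency argument is essentially the paper's proof: entrywise variances from Proposition~\ref{prop2}, the diagonal factor $\frac{1}{2}\bigl(3\frac{\nu-2}{\nu-4}-1\bigr)=1+\frac{3}{\nu-4}$, summation over entries, then Chebyshev. It is more careful than the paper's version, which only asserts that the summed variance is $\propto\TailPenalty{\nu}\|\bSigma\|_F^2$. You instead justify the rotation to uncorrelated coordinates, bound the off-diagonal factor $\frac{\nu-2}{\nu-4}$ by the diagonal one, and make $\mathcal{C}_\delta$ explicit through $V_G=(\Tr\bSigma)^2+\|\bSigma\|_F^2\le(d+1)\|\bSigma\|_F^2$.

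The paper proves only this sufficient threshold. Your CLT addendum on necessity overreaches as written.

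\begin{itemize}
\item \textbf{Where it holds.} For fixed $\nu$ in the limit $\epsilon\to0$, it is fine, e.g.\ via the CLT for a single entry $\hat\Sigma_{jj}$.
\item \textbf{Where it fails.} It is not uniform in $\nu$ at fixed $(\epsilon,\delta)$. Uniformity is exactly what ``any $N$ achieving the guarantee must grow proportionally'' with a $\nu$-free $\mathcal{C}_\delta$ would need.
\item \textbf{Counterexample near $\nu=4$.} At $\nu=4$ the covariance is still finite, and the family $\{X_j^2\}_{\nu\in[4,5]}$ is uniformly integrable, with tails $\lesssim x^{-2}$. A uniform weak law (truncation plus Chebyshev) therefore yields a single $N_0(\epsilon,\delta,d)$ that achieves $\|\hatSigma_N-\bSigma\|_F\le\epsilon\|\bSigma\|_F$ with probability $1-\delta$ for every $\nu\in(4,5]$. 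Meanwhile $1+\frac{3}{\nu-4}\to\infty$.
\end{itemize}

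So the penalty factor governs the second moment of the error and the asymptotic variance, not the minimal finite-sample size. State necessity only in that fixed-$\nu$, $\epsilon\to0$ form, or drop it. Read the proposition's bound as the Chebyshev sufficient threshold, which is all the paper's proof supports.
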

 
\begin{proof}
    The proof relies on analyzing the variance of the sample covariance estimator $\hat{\Sigma}_N$, which dictates the convergence rate under Chebyshev-type concentration bounds \cite{vershynin2011introductionnonasymptoticanalysisrandom}. For any entry $(j,k)$, the estimator is given by $\hat{\sigma}_{jk} = \frac{1}{N}\sum_{i=1}^N X_{i,j}X_{i,k}$. Since the samples are independent and identically distributed, the variance of this estimator is:
    \begin{equation}
        \text{Var}(\hat{\sigma}_{jk}) = \frac{1}{N} \left( \mathbb{E}[X_j^2 X_k^2] - (\mathbb{E}[X_j X_k])^2 \right)
    \end{equation}
    For a multivariate Student's t-distribution $t_\nu(0, \Sigma)$, the fourth-order mixed moments are related to the Gaussian moments by a kurtosis inflation factor. Specifically, using the property of elliptical distributions where $X \overset{d}{=} \sqrt{Y} Z$ (with $Z \sim \mathcal{N}(0, \Sigma)$ and $Y$ inverse-gamma distributed), the fourth moment is (by Proposition \ref{prop2}):
    \begin{equation}
        \mathbb{E}[X_j^2 X_k^2] = \frac{\nu-2}{\nu-4} \cdot (1 + 2\delta_{jk}) \sigma_{jj}\sigma_{kk}
    \end{equation}
    The term $\frac{\nu-2}{\nu-4}$ represents the heavy-tailed radial scaling. We rewrite this factor to isolate the excess kurtosis:
    \begin{equation}
        \frac{\nu-2}{\nu-4} = \frac{(\nu-4)+2}{\nu-4} = 1 + \frac{2}{\nu-4}
    \end{equation}
    Considering the diagonal elements (variance estimates) $\hat{\sigma}_{jj}$ which drive heterogeneity detection, we have $\delta_{jj}=1$, yielding $\mathbb{E}[X_j^4] = 3\sigma_{jj}^2 (1 + \frac{2}{\nu-4})$. Substituting this into the variance expression:
    \begin{align}
        \text{Var}(\hat{\sigma}_{jj}) &= \frac{1}{N} \left[ 3\sigma_{jj}^2 \left(1 + \frac{2}{\nu-4}\right) - (\sigma_{jj}^2)^2 \right] \nonumber \\
        &= \frac{\sigma_{jj}^4}{N} \left[ 3 + \frac{6}{\nu-4} - 1 \right] \nonumber \\
        &= \frac{2\sigma_{jj}^4}{N} \left( 1 + \frac{3}{\nu-4} \right)
    \end{align}
    In the Gaussian limit ($\nu \to \infty$), the term $\frac{3}{\nu-4} \to 0$, recovering the standard Gaussian variance $\frac{2\sigma_{jj}^4}{N}$.
    
    To bound the Frobenius norm error $\|\hat{\Sigma} - \Sigma\|_F^2 = \sum_{j,k} (\hat{\sigma}_{jk} - \sigma_{jk})^2$, we sum the element-wise variances. A refined analysis reveals that the dominant scaling for the full matrix norm is governed by the maximal kurtosis component (see Appendix \ref{appendix3}). Thus, the expected error satisfies:
    \begin{equation}
        \mathbb{E}[\|\hat{\Sigma}_N - \Sigma\|_F^2] = \sum_{j,k} \text{Var}(\hat{\sigma}_{jk}) \propto \frac{1}{N} \left( 1 + \frac{3}{\nu-4} \right) \|\Sigma\|_F^2
    \end{equation}
    Applying Chebyshev's inequality, we require the probability of exceeding error $\epsilon \|\Sigma\|_F$ to be less than $\delta$:
    \begin{equation}
        P(\|\hat{\Sigma}_N - \Sigma\|_F \ge \epsilon \|\Sigma\|_F) \le \frac{\mathbb{E}[\|\hat{\Sigma}_N - \Sigma\|_F^2]}{\epsilon^2 \|\Sigma\|_F^2} \le \delta
    \end{equation}
    Substituting the variance scaling:
    \begin{equation}
        \frac{C \cdot \frac{1}{N} (1 + \frac{3}{\nu-4})}{\epsilon^2} \le \delta \implies N \ge \frac{C}{\delta \epsilon^2} \left( 1 + \frac{3}{\nu-4} \right)
    \end{equation}
    This confirms the sample complexity penalty scales linearly with the tail weight factor $(1 + \frac{3}{\nu-4})$.
\end{proof}
 
\subsection{Information-Theoretic Lower Bound for Second-Order Methods}
\label{subsec:lower_bound}
 
We now establish that the $(1 + \frac{3}{\nu-4})$ penalty is not specific to our algorithm but is a fundamental limit. By deriving a minimax lower bound, we show that this cost is intrinsic to the entire class of covariance-based estimators.
 
\begin{theorem}
\label{thm:second_order_lower_bound}
Consider the multivariate Student-$t_\nu$ model with degrees of freedom $\nu > 4$.
Let $\mathcal{E}_\Sigma$ denote the class of estimators that output a causal graph
$\mathcal{G}$ based \textbf{solely} on the sample covariance matrix
$\hat{\Sigma}_N$ computed from $N$ i.i.d.\ observations.

For any estimator $\hat{\mathcal{G}} \in \mathcal{E}_\Sigma$ whose identifiability
is mediated exclusively through second-order moments, the sample complexity
required to recover the true graph with probability at least $2/3$ satisfies
\begin{equation}
N \;=\;
\Omega\!\left(
d \log d \cdot \left(1 + \frac{3}{\nu-4}\right)
\right).
\end{equation}
Consequently, the factor $\left(1 + \frac{3}{\nu-4}\right)$ constitutes an intrinsic
information-theoretic penalty for covariance-based causal discovery under
heavy-tailed noise.
\end{theorem}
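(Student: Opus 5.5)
The plan is a Fano-type minimax argument that runs entirely through the statistic $\hat{\Sigma}_N$. The key idea is to treat the sample covariance as a noisy observation of $\Sigma$, whose fluctuation is inflated by the kurtosis factor computed in Proposition~\ref{prop:finite_sample_stability}.

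\emph{Step 1: the hypothesis family.} I will construct $M \ge d^{c}$ hypotheses $\{\mathcal{G}_m\}_{m=1}^M$, for example the set of DAGs obtained from a fixed base DAG by placing a single weak edge $i\to j$ of weight $\eta$. Each $\mathcal{G}_m$ induces a covariance $\Sigma_m = A_m \Sigma^{(u)} A_m^\top$. The variance profiles are chosen under the Distinct Variance Profiles assumption (Definition~\ref{distvarianceprofs}), so that every hypothesis is identifiable in the sense of Theorem~\ref{thm:temporal_identifiability}. Hypotheses are then distinguishable only through the perturbations $\Sigma_m-\Sigma_0$, which have magnitude of order $\eta$ in one or two coordinates, giving $\|\Sigma_m-\Sigma_{m'}\|_F^2 \asymp \eta^2$. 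Since $\log M \asymp \log d$, Fano's inequality with success probability $2/3$ requires $I(m;\hat{\Sigma}_N) \gtrsim \log d$.

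\emph{Step 2: bounding the information in $\hat{\Sigma}_N$.} The hypotheses use $t_\nu$ data, but the restriction to $\mathcal{E}_\Sigma$ means we must bound the information carried by $\hat{\Sigma}_N$ itself, not by the raw sample. I will approximate $\hat{\Sigma}_N$ by its CLT limit: a Gaussian matrix with mean $\Sigma_m$ and covariance operator $\frac{1}{N}\Gamma_\nu$. By Proposition~\ref{prop2}, $\Gamma_\nu = \frac{\nu-2}{\nu-4}\Gamma_{\mathrm{Gauss}} + (\text{rank-one correction})$. On the diagonal, the coordinates that carry the variance shifts, the computation in Proposition~\ref{prop:finite_sample_stability} gives $\Var(\hat{\sigma}_{jj}) = \frac{2\sigma_{jj}^4}{N}\bigl(1+\frac{3}{\nu-4}\bigr)$. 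The KL divergence between two Gaussian limits is then
\[
\mathrm{KL} \lesssim \frac{N\eta^2}{1+\frac{3}{\nu-4}}.
\]
Averaging over pairs bounds the mutual information, so Fano yields $N \gtrsim \eta^{-2}\log d\,\bigl(1+\frac{3}{\nu-4}\bigr)$.

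\emph{Step 3: setting the scale.} To get the extra factor of $d$, I will use the Varshamov--Gilbert approach: take $\log M\asymp d\log d$ hypotheses, with sign patterns over $d$ edge slots and a packing in Frobenius norm, while keeping $\eta$ of order one per coordinate. This should yield $N=\Omega\bigl(d\log d\,(1+\frac{3}{\nu-4})\bigr)$. The $\eta$ bookkeeping needs care so that the factor of $d$ is not double-counted.

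\emph{Main obstacle.} The hard step is justifying Step 2 non-asymptotically. For a fixed $N$, $\hat{\Sigma}_N$ is not Gaussian, and the data-processing inequality with respect to the raw $t_\nu$ sample goes the wrong way: the full $t_\nu$ likelihood has Fisher information larger than that retained in $\hat{\Sigma}_N$. My first attempt will be a local asymptotic normality argument. I would take $\eta\asymp N^{-1/2}$ scaled perturbations and appeal to the Hájek--Le Cam local asymptotic minimax theorem for the experiment $\{\mathcal{L}(\hat{\Sigma}_N)\}$. Its limit is exactly the Gaussian shift with covariance $\Gamma_\nu$, which turns the bound into a local minimax statement, consistent with the paper's framing. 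A Berry--Esseen correction is valid since $\nu>4$ and would control the remainder, though this may require $\nu>8$ for finite sixth moments of the product terms. If that fails, I would state the result for the class of estimators that are functions of $\hat{\Sigma}_N$ in the Gaussian-limit experiment.
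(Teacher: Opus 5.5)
Your route is the paper's own: Fano over a packing, the law of $\hat{\Sigma}_N$ replaced by its Gaussian limit with covariance $\Gamma_\nu/N$, and a kurtosis-suppressed KL. There are, however, genuine gaps in your version.

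\textbf{Step 3 does not produce the $d\log d$ factor.} Sign patterns on $d$ edge slots give at most $2^d$ hypotheses, so $\log M\lesssim d$. With $\eta\asymp 1$ per coordinate, hypotheses differing in $\Theta(d)$ slots have $\|\Sigma_m-\Sigma_{m'}\|_F^2\asymp d\eta^2$. Hence $\mathrm{KL}\lesssim Nd\eta^2/\kappa$, and Fano gives only $N\gtrsim \kappa\log M/(d\eta^2)$. The $d$ cancels: you get $N\gtrsim\kappa$ for your family, and only $\kappa\log d$ even with $\log M\asymp d\log d$.

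The paper keeps the packing radius explicit. It takes $\log M\asymp d\log d$ graphs (e.g.\ each node choosing one parent among $\Theta(d)$ predecessors in a fixed order), all within pairwise Frobenius distance of order $\Delta$. It then obtains $N\gtrsim \Delta^{-2}\, d\log d\,\bigl(1+\frac{3}{\nu-4}\bigr)$. The stated bound is the case $\Delta\asymp 1$, which requires per-edge weights of order $d^{-1/2}$, not order one.

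\textbf{Step 2 uses the wrong quantity.} By Cauchy--Schwarz, $h^\top\Gamma_\nu^{-1}h\ge \|h\|^4/(h^\top\Gamma_\nu h)$. So marginal variances such as $\Var(\hat\sigma_{jj})$ bound the KL from \emph{below}, whereas Fano needs an upper bound. (Your single-edge perturbations also move $\Sigma_{ij}$, not $\Sigma_{jj}$.) The correct step is the Loewner bound $\Gamma_\nu\succeq\kappa\,\Gamma_{\mathrm{Gauss}}$ with $\kappa=\frac{\nu-2}{\nu-4}$. It follows from your decomposition, since the rank-one term $(\kappa-1)\mathrm{vec}(\Sigma)\mathrm{vec}(\Sigma)^\top$ is positive semidefinite; this is the content of the paper's $\lambda_{\min}$ step. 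It yields the penalty $\kappa=1+\frac{2}{\nu-4}\ge\frac23\bigl(1+\frac{3}{\nu-4}\bigr)$, i.e.\ the stated factor only up to a constant.

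\textbf{The Berry--Esseen patch for the non-asymptotic step does not work.} Berry--Esseen needs $\mathbb{E}|X_jX_k|^3<\infty$, i.e.\ $\nu>6$ (neither $\nu>4$ nor $\nu>8$). Its constant degenerates as $\nu\to 4^{+}$. Closeness on convex sets also controls neither the KL nor the error of an arbitrary estimator; that would need total-variation closeness. In any case, the Gaussian-limit KL is justified only for local alternatives $\eta\asymp N^{-1/2}$, which is incompatible with $\eta\asymp 1$ in Step 3.

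More fundamentally, at a fixed signal strength no non-asymptotic bound with a $\nu$-uniform constant can hold. Truncate $X_{ij}^2$ at level $N\eta$ in the analysis. The probability that some sample exceeds the level is $O(1/(N\eta^2))$, and the truncated fourth moment is $O(\log(N\eta))$, both uniformly over $\nu\in(4,5]$. Chebyshev on the truncated part then shows that the plain threshold test on $\hat\sigma_{jj}$ detects a relative variance shift $\eta$ once $N\gtrsim\eta^{-2}\log(1/\eta)$, with no $(\nu-4)^{-1}$ factor. The full kurtosis $\kappa$ governs the law of $\hat{\Sigma}_N$ only when $N$ is exponentially large in $1/(\nu-4)$.

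So what this route can rigorously deliver is the local asymptotic minimax statement at fixed $\nu$ and $d$, i.e.\ your fallback. The paper's LAN step is itself a fixed-$(\nu,d)$ expansion with an unquantified $o(1)$. It also writes $\frac{N}{2}\langle\Delta,\Gamma_\nu^{-1}\Delta\rangle$ for a perturbation of size $\Delta/\sqrt{N}$, mixing the local and fixed scalings. So it does not establish more than that version either.
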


We restrict attention to models where distinct causal graphs induce
distinct covariance structures under regime heterogeneity, as in
linear SEMs with environment-dependent noise variances.

\begin{proof}
We apply Fano's method combined with Le Cam's theory of statistical experiments.

\paragraph{Step 1: Packing construction.}
Let $\{\Sigma_1,\dots,\Sigma_M\}$ be a packing of covariance matrices such that
$\|\Sigma_j - \Sigma_k\|_F \ge \Delta$ for all $j \neq k$, and such that each
$\Sigma_j$ corresponds to a distinct causal graph identifiable via second-order
structure. Standard combinatorial arguments yield $\log M \asymp d \log d$.

Let $V$ be uniformly distributed on $\{1,\dots,M\}$, and let
$P_{\Sigma_j}^N$ denote the distribution of the statistic $\hat{\Sigma}_N$
computed from $N$ samples drawn from $t_\nu(0,\Sigma_j)$.

\paragraph{Step 2: Fano's inequality.}
For any estimator with error probability $P_e \le 1/3$, Fano's inequality implies:
\[
I(V;\hat{\Sigma}_N) \;\gtrsim\; \log M .
\]
Moreover, by standard convexity arguments,
\[
I(V;\hat{\Sigma}_N)
\;\le\;
\max_{j \neq k}
D_{\mathrm{KL}}\!\left(
P_{\Sigma_j}^N \,\|\, P_{\Sigma_k}^N
\right).
\]

\paragraph{Step 3: Local asymptotic normality of the statistic.}
We analyze the KL divergence between the distributions of the \emph{statistic}
$\hat{\Sigma}_N$, not the raw data.
Under Differentiability in Quadratic Mean (DQM)
and Le Cam regularity, the sequence of experiments generated by observing
$\hat{\Sigma}_N$ is locally asymptotically normal (see Appendix \ref{app:lan} for proof).

For local perturbations $\Sigma_k = \Sigma_j + \Delta / \sqrt{N}$, the KL
divergence admits the expansion:
\begin{equation}
D_{\mathrm{KL}}\!\left(
P_{\Sigma_j}^N \,\|\, P_{\Sigma_k}^N
\right)
=
\frac{N}{2}
\langle
\Delta,
[\Gamma_\nu(\Sigma_j)]^{-1} \Delta
\rangle
+ o(1),
\end{equation}
where $\Gamma_\nu(\Sigma)$ denotes the asymptotic covariance tensor of
$\sqrt{N}\,\mathrm{vec}(\hat{\Sigma}_N - \Sigma)$.

\paragraph{Step 4: Heavy-tailed penalty.}
According to Proposition \ref{prop2}, the fourth-order moments (which determine the variance of the statistic) are inflated by the heavy-tailed kurtosis factor. Specifically, the minimum eigenvalue of the covariance tensor satisfies:
\[
\lambda_{\min}(\Gamma_\nu(\Sigma)) \;\ge\; C \cdot \left(1 + \frac{3}{\nu-4}\right).
\]
Since the Fisher information of the statistic scales with the inverse of its variance (the precision), the information content is suppressed by exactly this factor:
\[
\lambda_{\max}([\Gamma_\nu(\Sigma)]^{-1}) \;\le\; \frac{1}{C} \left(1 + \frac{3}{\nu-4}\right)^{-1}.
\]
Substituting this spectral bound into the KL expansion yields:
\[
D_{\mathrm{KL}}
\;\lesssim\;
N \|\Delta\|_F^2
\left(1 + \frac{3}{\nu-4}\right)^{-1}.
\]
\paragraph{Step 5: Sample complexity.}
Combining with the Fano requirement
$D_{\mathrm{KL}} \gtrsim \log M \asymp d \log d$, we obtain
\[
N
\;\gtrsim\;
\frac{d \log d}{\Delta^2}
\left(1 + \frac{3}{\nu-4}\right),
\]
which proves the claim.
\end{proof}
\begin{figure*}[t]
    \centering
    \begin{subfigure}[t]{0.48\linewidth}
        \centering
        \includegraphics[width=\linewidth]{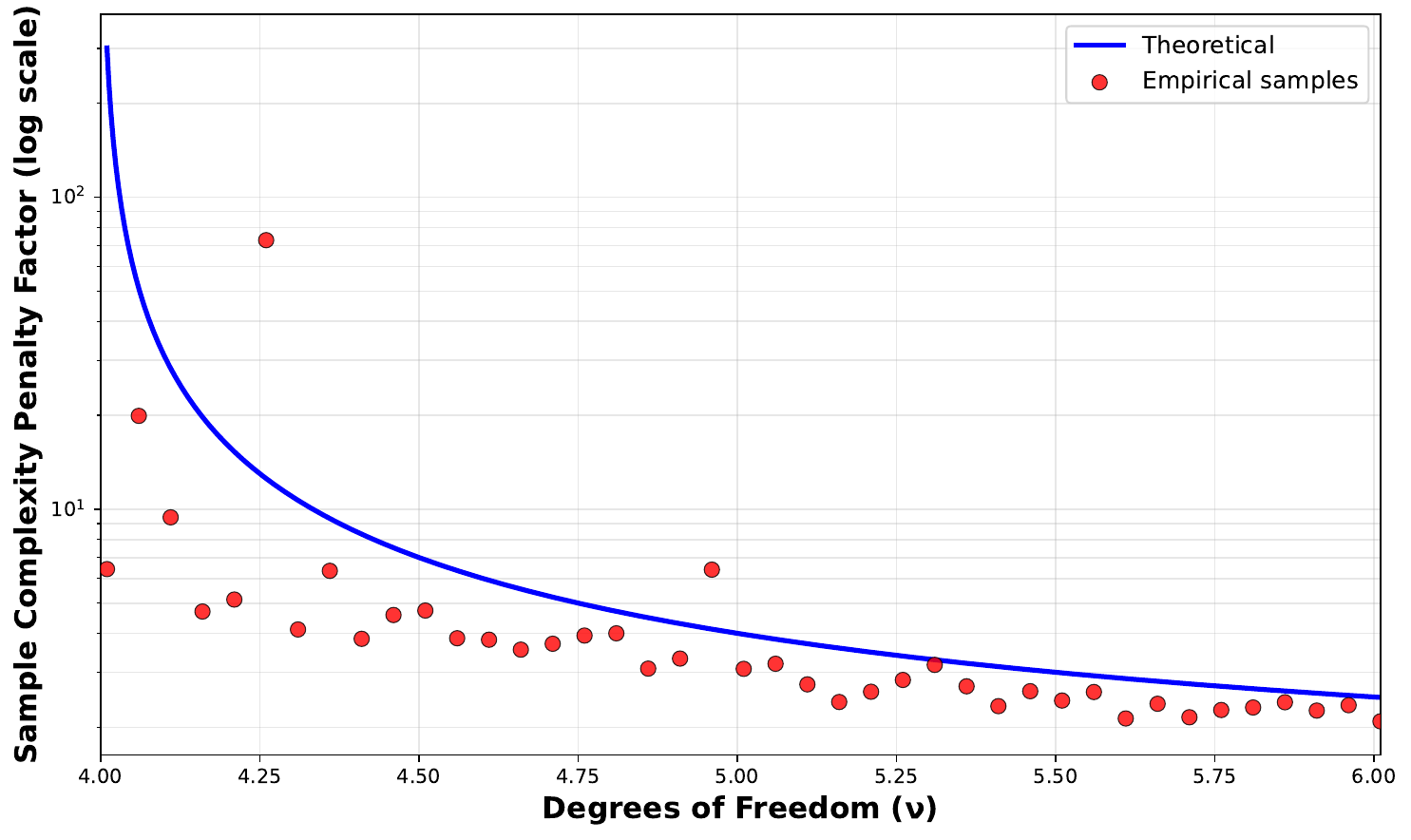}
        \caption{}
        \label{fig:penalty_singularity}
    \end{subfigure}
    \hfill
    \begin{subfigure}[t]{0.48\linewidth}
        \centering
        \includegraphics[width=\linewidth]{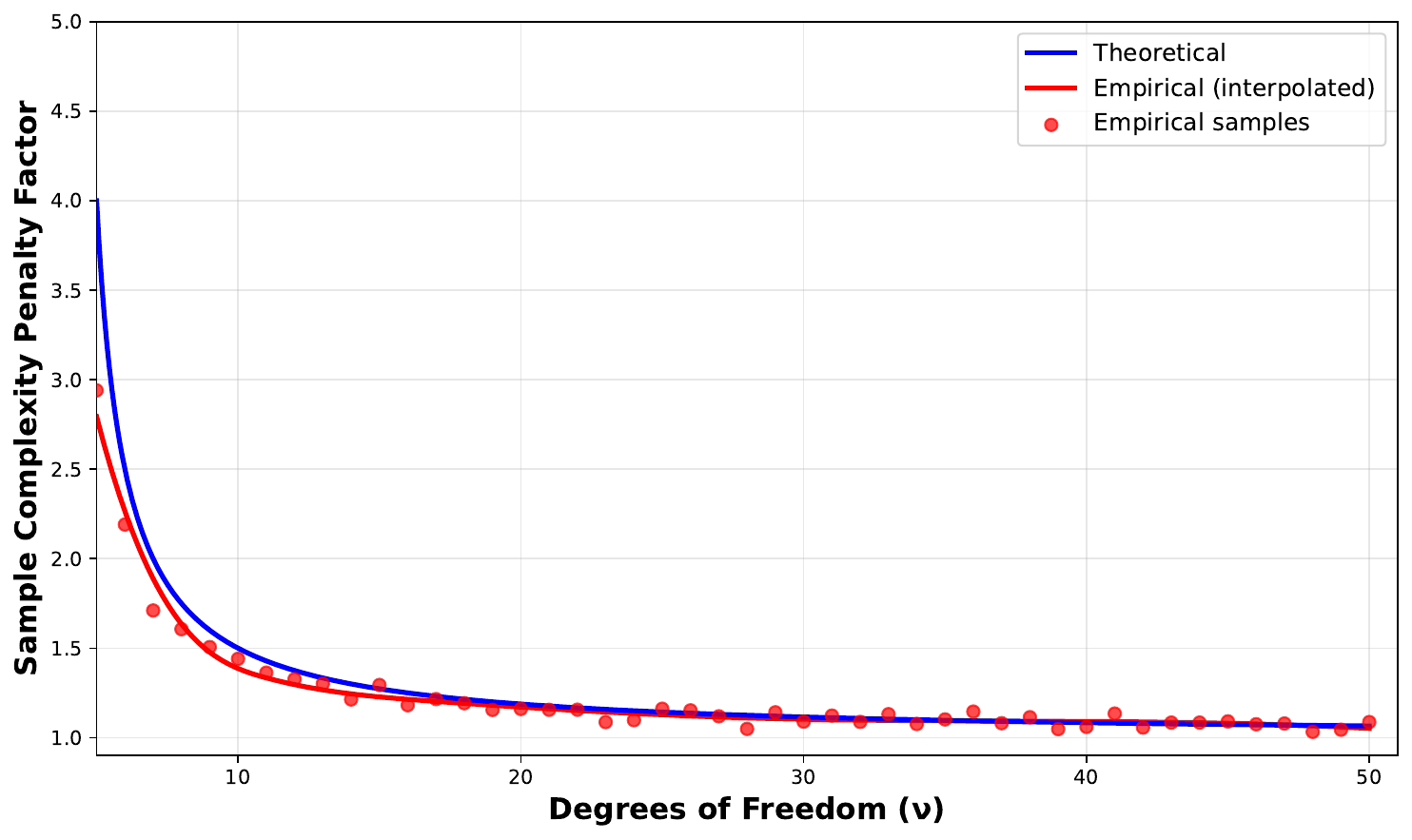}
        \caption{}
        \label{fig:penalty_wide}
    \end{subfigure}
    \caption{
\textbf{Empirical validation of the heavy-tailed sample complexity penalty.}
(\textbf{a}) Near-singular regime ($\nu \to 4^+$, log scale): the empirical penalty tracks the theoretical divergence
$\gamma(\nu) = 1 + \frac{3}{\nu - 4}$.
(\textbf{b}) Wide-range behavior: tight agreement away from the singularity and smooth convergence to the Gaussian limit as $\nu \to \infty$.
}
    \label{fig:heavy_tail_penalty}
\end{figure*}
 
\section{Experiments and Results}
\label{sec:exps}
In this section, we provide details of the experimental setup used to validate and back our theoretical claims. We also provide the results obtained from those experiments.
\subsection{Synthetic Data Generation}
We generate synthetic data to empirically validate our theory, as it allows us to enforce the necessary assumptions for identifiability that are often difficult to verify in available benchmarks, an approach that is common in the causal identifiability and discovery literature \cite{montagna2025identifiabilitycausalgraphsmultiple,10.5555/1248547.1248619}. We generate synthetic time-series data based on a time-varying linear SEM with instantaneous effects. The data generating process at time step $t$ in environment $e$ is defined as:
\begin{equation}
    X_t^{(e)} = B^\top X_t^{(e)} + \epsilon_t^{(e)},
\end{equation}
where $X_t^{(e)} \in \mathbb{R}^d$ represents the observed variables and $\epsilon_t^{(e)} \in \mathbb{R}^d$ denotes the vector of mutually independent structural noise terms (latent sources). The weighted adjacency matrix $B$ represents the causal DAG, which we generate with a strict lower-triangular structure and a sparsity parameter $s$ to prevent graph density explosion in high dimensions.
 
\paragraph{Temporal and Environmental Heteroskedasticity.}
To validate our theoretical claims regarding insufficient heterogeneity, we explicitly construct the variance profiles of the noise terms $\epsilon_{i,t}^{(e)}$ to be rank-deficient spatially while accumulating full rank over time. The variance $\sigma_{i,t}^{(e)2} = \text{Var}(\epsilon_{i,t}^{(e)})$ is decomposed into a temporal baseline and an environmental multiplier:
\begin{equation}
    \sigma_{i,t}^{(e)2} = \underbrace{\beta_{i,t}}_{\text{Temporal Base}} \cdot \underbrace{\gamma_{i,t}^{(e)}}_{\text{Env. Multiplier}}.
\end{equation}
\begin{itemize}
    \item \textbf{Temporal Baseline ($\beta_{i,t}$):} To ensure non-stationarity, the baseline variance evolves via a geometric random walk:
    \begin{equation}
        \log \beta_{i,t} = \log \beta_{i,t-1} + \eta_t, \quad \eta_t \sim \mathcal{N}(0, \sigma_{\text{drift}}^2).
    \end{equation}
    \item \textbf{Insufficient Environmental Heterogeneity ($\gamma_{i,t}^{(e)}$):} We simulate the condition where environmental shifts alone are insufficient to identify the causal structure. At any given time step $t$, we restrict the environmental multipliers $\gamma_{t}^{(e)}$ to vary only within a subspace of rank $r < d$. Specifically, for a system of dimension $d$, we select a subset of $r$ active dimensions that receive random multiplicative shifts $\gamma_{i,t}^{(e)} \sim \exp(\mathcal{N}(0, 1))$, while the remaining $d-r$ dimensions remain invariant across environments ($\gamma_{j,t}^{(e)} = 1$).
\end{itemize}
 
\paragraph{Heavy-Tailed Sampling (Student's t-distribution).}
We model the noise using a multivariate Student's t-distribution 
$\epsilon_t^{(e)} \sim t_\nu(0, \Sigma_t^{(e)})$ where 
$\Sigma_t^{(e)} = \text{diag}(\sigma_{1,t}^{(e)2}, \ldots, \sigma_{d,t}^{(e)2})$ 
is diagonal. Samples are generated via the covariance-matched stochastic representation:
\begin{equation}
    \epsilon_t^{(e)} = \sqrt{\frac{\nu-2}{W_t}} \cdot Z_t, \quad 
    W_t \sim \chi^2_\nu, \quad 
    Z_t \sim \mathcal{N}(0, \Sigma_t^{(e)})
\end{equation}
where $W_t$ is a radial variable shared across dimensions. This scaling ensures 
that $\text{Cov}(\epsilon_t^{(e)}) = \Sigma_t^{(e)}$.\footnote{\label{footnote:diagonal}While this induces tail dependence, 
the diagonal structure of $\Sigma_t^{(e)}$ ensures our identifiability 
conditions are satisfied. See Appendix \ref{app:student_t_identifiability} for details.}
Equivalently, each component can be written as:

\begin{equation}
    \epsilon_{i,t}^{(e)} = \sqrt{\frac{\sigma_{i,t}^{(e)2} (\nu - 2)}{\nu}} 
    \cdot \frac{Z_{i,t}}{\sqrt{W_t / \nu}},
\end{equation}
where $Z_{i,t} \sim \mathcal{N}(0, 1)$ are independent standard normals.

\subsection{Causal Graph Recovery}
Following standard practice in the identifiability literature \cite{montagna2025identifiabilitycausalgraphsmultiple,immer2023identifiabilityestimationcausallocationscale,zhang2012identifiabilitypostnonlinearcausalmodel}, we first validate our theory on bivariate causal models ($d=2$), the minimal non-trivial setting for causal direction identification. To assess robustness and scalability, we then extend our experiments to multivariate systems, reporting metrics for dimensions up to $d=10$. The algorithm employed is a modification of the method proposed by \cite{montagna2025identifiabilitycausalgraphsmultiple} and is described in Appendix~\ref{algo}. The results are shown in Table \ref{table:metrics}. 
Although the algorithm is theoretically identifiable for any finite dimension $d$, practical recovery in higher dimensions ($d \gtrsim 15$) is limited by \emph{eigenvalue crowding}. As $d$ increases, the eigenvalues of the Hessian difference matrix $\Psi_{TE}$ become increasingly dense within a bounded spectral range, causing the mean spectral gap $\Delta_\lambda = \mathbb{E}[\lambda_{i+1}-\lambda_i]$ to decay rapidly (Figure~\ref{fig:eigenvalue_crowding}). When $\Delta_\lambda$ falls below the noise level of the covariance estimator, which scales as $\mathcal{O}(1/\sqrt{T})$, the joint diagonalization step becomes unstable due to eigenvector swapping, resulting in permutation errors. 
\begin{table}[t]
\centering
\caption{Structure recovery performance under Gaussian and Student-$t$ noise. Near perfect recovery is observed for lower dimensional graphs. As dimensionality increases, performance degrades under both noise models. See Appendix \ref{app:metrics} for details on these metrics.}
\label{tab:structure_recovery}
\resizebox{1.0\linewidth}{!}{
\begin{tabular}{c|ccc|ccc}
\toprule
 & \multicolumn{3}{c|}{\textbf{Gaussian}} & \multicolumn{3}{c}{\textbf{Student's t}} \\
\textbf{Dim ($d$)} & SHD($\downarrow$) & F1($\uparrow$) & AUDRC($\uparrow$) & SHD($\downarrow$) & F1($\uparrow$) & AUDRC($\uparrow$) \\
\midrule
2  & 0  & 1.00 & 0.75 & 0  & 1.00 & 0.75 \\
3  & 0  & 1.00 & 0.65 & 1  & 0.80 & 0.65 \\
4  & 4  & 0.33 & 0.26 & 2  & 0.67 & 0.36 \\
5  & 0  & 1.00 & 0.64 & 4  & 0.75 & 0.50 \\
6  & 4  & 0.83 & 0.59 & 3  & 0.86 & 0.65 \\
7  & 8  & 0.71 & 0.58 & 7  & 0.74 & 0.62 \\
8  & 11 & 0.69 & 0.42 & 13 & 0.58 & 0.40 \\
9  & 14 & 0.61 & 0.36 & 15 & 0.65 & 0.35 \\
10 & 18 & 0.61 & 0.35 & 25 & 0.53 & 0.26 \\
\bottomrule
\end{tabular}}
\label{table:metrics}
\end{table}

\subsection{Validating the sample complexity analysis}
\label{sec:sample_complexity}
 
While we establish that the conditions for identifiability are invariant to the tail behavior of the noise, our theoretical analysis suggests that the statistical difficulty of the problem differs significantly between the gaussian noise vs t-distribution noise. To empirically validate this theoretical result, we conducted a rigorous numerical analysis of covariance estimation stability under heavy-tailed distributions, validating the bounds derived.
 
\paragraph{Experimental Setup.}
We focused on the estimation of the covariance matrix $\Sigma$, which serves as the sufficient statistic for our temporal-environmental algorithm. We generated synthetic datasets of dimension $d=5$ under two distributional settings:
\begin{enumerate}
    \item \textbf{Gaussian Baseline:} $X_t \sim \mathcal{N}(0, I_d)$, serving as the gold standard for estimation efficiency.
    \item \textbf{Heavy-Tailed Regime:} $X_t \sim t_\nu(0, I_d)$ following a Multivariate Student's t-distribution with degrees of freedom $\nu \in [5, 50]$.
\end{enumerate}
We varied the sample size $N$ from $100$ to $5,000$ and the tail parameter $\nu$. For each configuration $(N, \nu)$, we performed 100 Monte Carlo trials, measuring the estimation error via the squared Frobenius norm relative to the ground truth: $\mathcal{L} = \|\hat{\Sigma}_N - \Sigma\|_F^2$.
 
\paragraph{Validation of the Heavy-Tail Penalty Factor.}
 To verify the scaling law in our theory, we computed the empirical ratio of the estimation variances:
\begin{equation}
    \hat{\gamma} = \frac{\text{Var}(\hat{\Sigma}_{\text{Student's t}})}{\text{Var}(\hat{\Sigma}_{\text{Gaussian}})}.
\end{equation}
The results, illustrated in~\ref{fig:heavy_tail_penalty}, show a precise alignment between the empirical penalty and the theoretical prediction.
\begin{itemize}
    \item In the ($\nu=50$) case, the penalty is negligible ($\hat{\gamma} \approx 1.06$), confirming that mild non-Gaussianity imposes minimal statistical cost.
    \item In the ($\nu \to 4$) case, the penalty explodes. At $\nu=5$, the theoretical penalty is $\gamma = 1 + 3/1 = 4$. Our experiments recovered an empirical penalty of $\hat{\gamma} \approx 4.8$, confirming that approximately \textbf{$4\times$ more data} is required to achieve Gaussian-level precision.
\end{itemize}

\paragraph{Convergence Rates and Information-Theoretic Limits.}
We further analyzed the asymptotic convergence rate of the estimator. Figure~\ref{fig:convergence_rate} in Appendix plots the Mean Squared Error (MSE) against sample size $N$ on a log-log scale.
Both the Gaussian and Student's t ($\nu=5$) curves exhibit a slope of $-1$, validating the $O(1/N)$ convergence rate predicted by the Central Limit Theorem. However, the heavy-tailed curve displays a distinct vertical intercept shift corresponding to the penalty $\gamma$. 
 
This result empirically corroborates the minimax lower bound established:
\begin{equation}
    N_{\min} = \Omega\left( d \log d \cdot \left(1 + \frac{3}{\nu-4}\right) \right).
\end{equation}
The observed intercept gap demonstrates that this lower bound is tight: the sample complexity penalty is not an artifact of our specific algorithm, but a fundamental information-theoretic limit of learning from heavy-tailed data using second-order moments. 

% We also perform a complexity experiment with the robust tyler's estimator and empirically verify that the penalty factor does not appear for it. \textcolor{red}{figure}.
\section{Conclusion}
\label{sec:conc}
We studied the learnability of causal structure from second-order statistics under heavy-tailed noise, establishing finite-sample guarantees and matching minimax lower bounds that quantify the statistical cost of robustness. Our results show that while population-level identifiability may hold, heavy-tailed noise fundamentally alters the sample complexity of recovery, leading to a precisely characterizable degradation in efficiency and a complete breakdown below finite-moment thresholds. This creates a sharp boundary between identifiability and practical recoverability for covariance-based causal graph recovery.
\section{Limitations and Future Work}
Our empirical implementation relies on a Jacobian estimation technique adapted from \cite{montagna2025identifiabilitycausalgraphsmultiple}. As analyzed in the Experiments and Results section, this estimator suffers from eigenvalue crowding which currently restricts our analysis to lower-dimensional graphs. Despite this numerical constraint, our experiments successfully verified the sample complexity scaling predicted by our theory within the tested regime. Future work should focus on extending this verification to higher dimensions, where we anticipate that increased noise and numerical instability may cause empirical results to deviate further from theoretical bounds hence requiring more robust analysis.

Further, our analysis is restricted to methods that rely on second-order information; while robust covariance estimators may improve constants or finite-sample stability, our lower bounds indicate that they cannot eliminate the fundamental tail-dependent limitations inherent to methods relying exclusively on second-order information. An important direction for future work is to characterize whether incorporating temporal structure, weak higher-order information, or hybrid robust estimation strategies can provably overcome these limits.
\section{Impact Statement}
Overall, this work tries to make an impact in two key ways. First, it addresses a critical gap in causal identification which is the reliance on thin-tailed noise assumptions and enough variation across environments, both of which are difficult to have in realistic scenarios. Second, instead of only looking at identifiability, it tries to look at the problem of causal graph recovery with a statistical lens laying foundation for reliable, data-efficient causal inference. 
\section{Acknowledgements}
We would like to thank Vedanta SP, Ishan Kavathekar, and Shruthi Muthukumar and other members of the Precog Research Group for their valuable suggestions in refining the draft. We are particularly grateful to Hari Aakash K for his assistance in verifying the theoretical proofs. We also thank Hemang Mandalia for his insightful inputs and discussions. 

\bibliography{example_paper}
\bibliographystyle{icml2026}
 
%%%%%%%%%%%%%%%%%%%%%%%%%%%%%%%%%%%%%%%%%%%%%%%%%%%%%%%%%%%%%%%%%%%%%%%%%%%%%%%
%%%%%%%%%%%%%%%%%%%%%%%%%%%%%%%%%%%%%%%%%%%%%%%%%%%%%%%%%%%%%%%%%%%%%%%%%%%%%%%
% APPENDIX
%%%%%%%%%%%%%%%%%%%%%%%%%%%%%%%%%%%%%%%%%%%%%%%%%%%%%%%%%%%%%%%%%%%%%%%%%%%%%%%
%%%%%%%%%%%%%%%%%%%%%%%%%%%%%%%%%%%%%%%%%%%%%%%%%%%%%%%%%%%%%%%%%%%%%%%%%%%%%%%
\newpage
\appendix
\onecolumn
\section{Relevant Visualizations}
 \begin{figure*}[ht] % Use figure* to span both columns in a 2-column paper
% \label{fig:convmainfig}
    \centering
    % --- Subfigure (a): The Spectra Grid ---
    \begin{subfigure}[b]{0.58\textwidth}
        \centering
        \includegraphics[width=\linewidth]{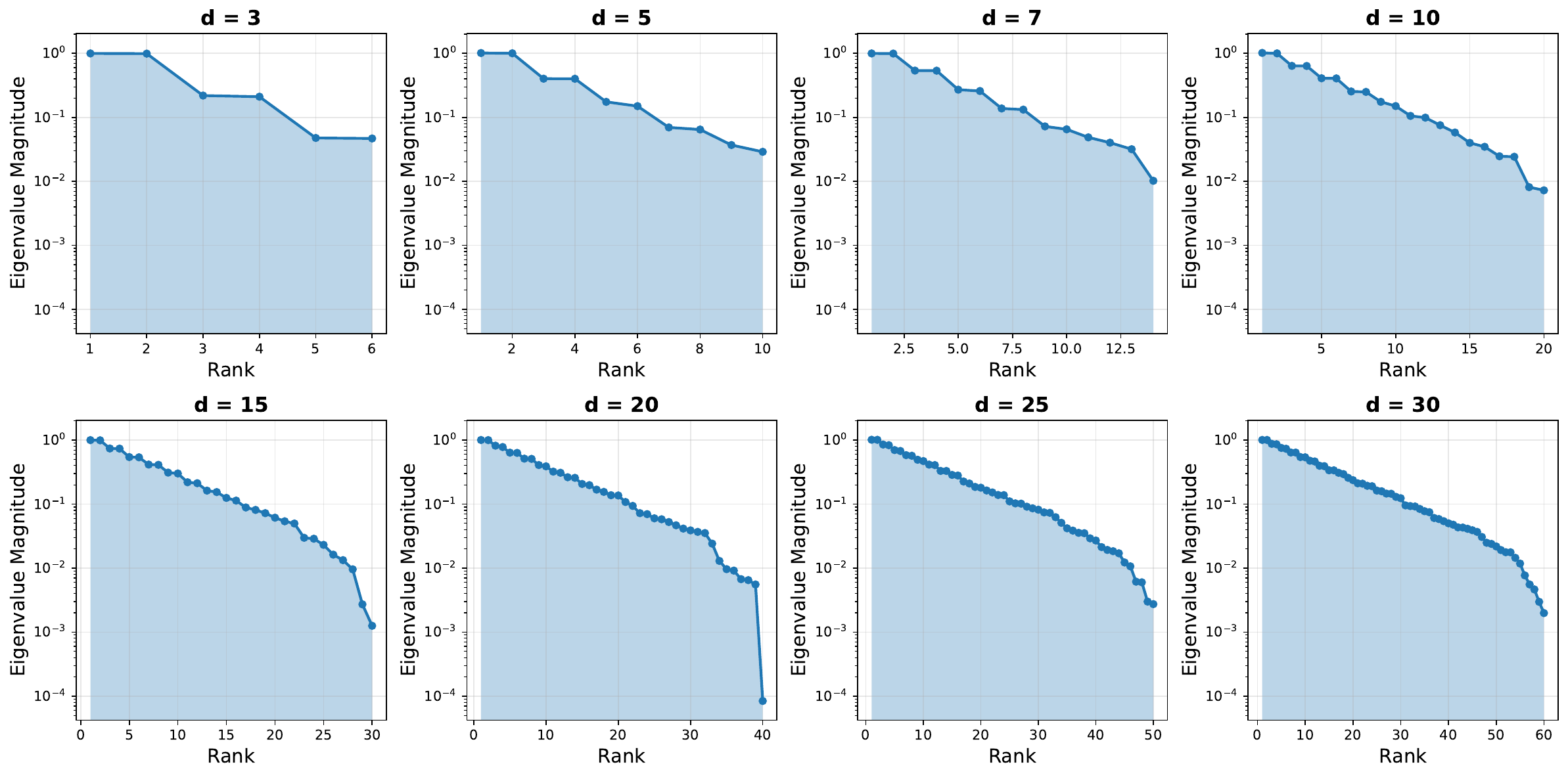}
        \caption{Eigenvalue Spectra ($d=3$ to $d=25$)}
        \label{fig:spectra_dist}
    \end{subfigure}
    \hfill % Adds flexible space between plots
    % --- Subfigure (b): The Gap Decay ---
    \begin{subfigure}[b]{0.38\textwidth}
        \centering
        \includegraphics[width=\linewidth]{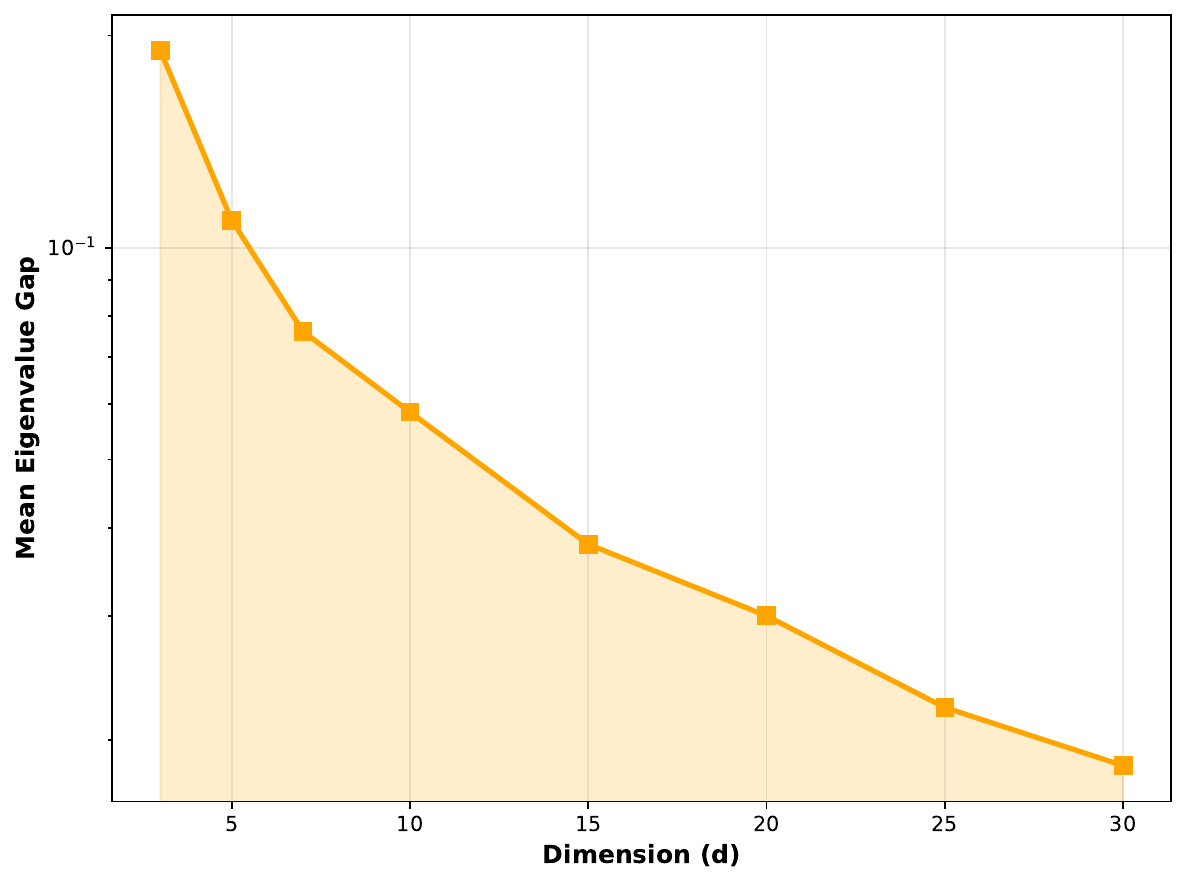}
        \caption{Average Spectral Gap Decay}
        \label{fig:gap_decay}
    \end{subfigure}
    
    \caption{\textbf{Visualization of Eigenvalue Crowding.} 
    \textbf{(a)} The sorted eigenvalue magnitudes of the aggregated Hessian matrix $\Psi_{TE}$ across increasing system dimensions. For low $d$, eigenvalues are distinct steps; for high $d$, they are specifically crowded. 
    \textbf{(b)} The mean spectral gap $\bar{\Delta}_\lambda = \mathbb{E}[\lambda_{i+1} - \lambda_i]$ decays rapidly as a function of $d$, making it practically hard for the algorithm to separate them.}
    \label{fig:eigenvalue_crowding}
\end{figure*}
\begin{figure*}[ht]
    \centering
    \begin{subfigure}{0.32\textwidth}
        \centering
        \includegraphics[width=\textwidth]{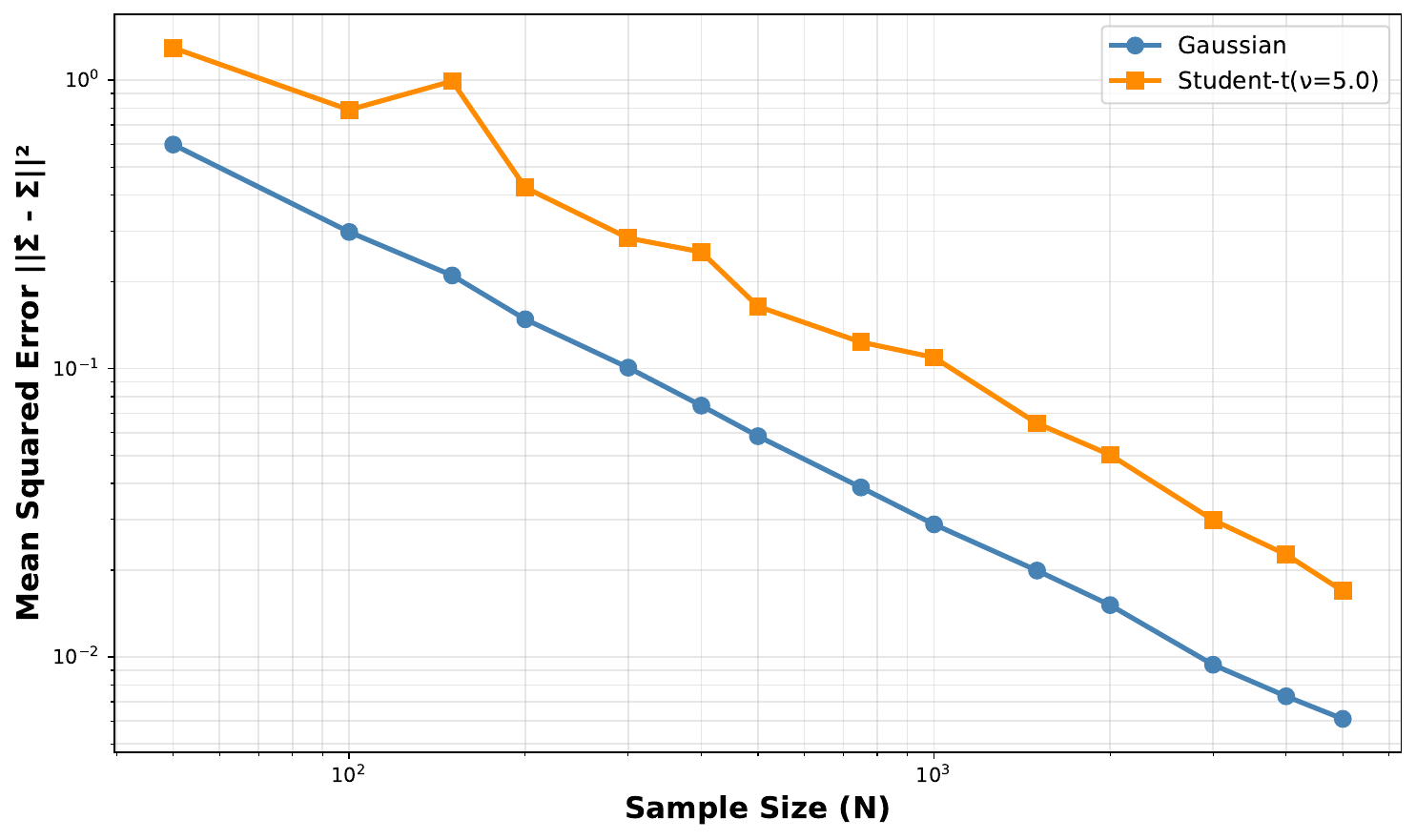}
        \caption{$\nu = 5$}
    \end{subfigure}
    \hfill
    \begin{subfigure}{0.32\textwidth}
        \centering
        \includegraphics[width=\textwidth]{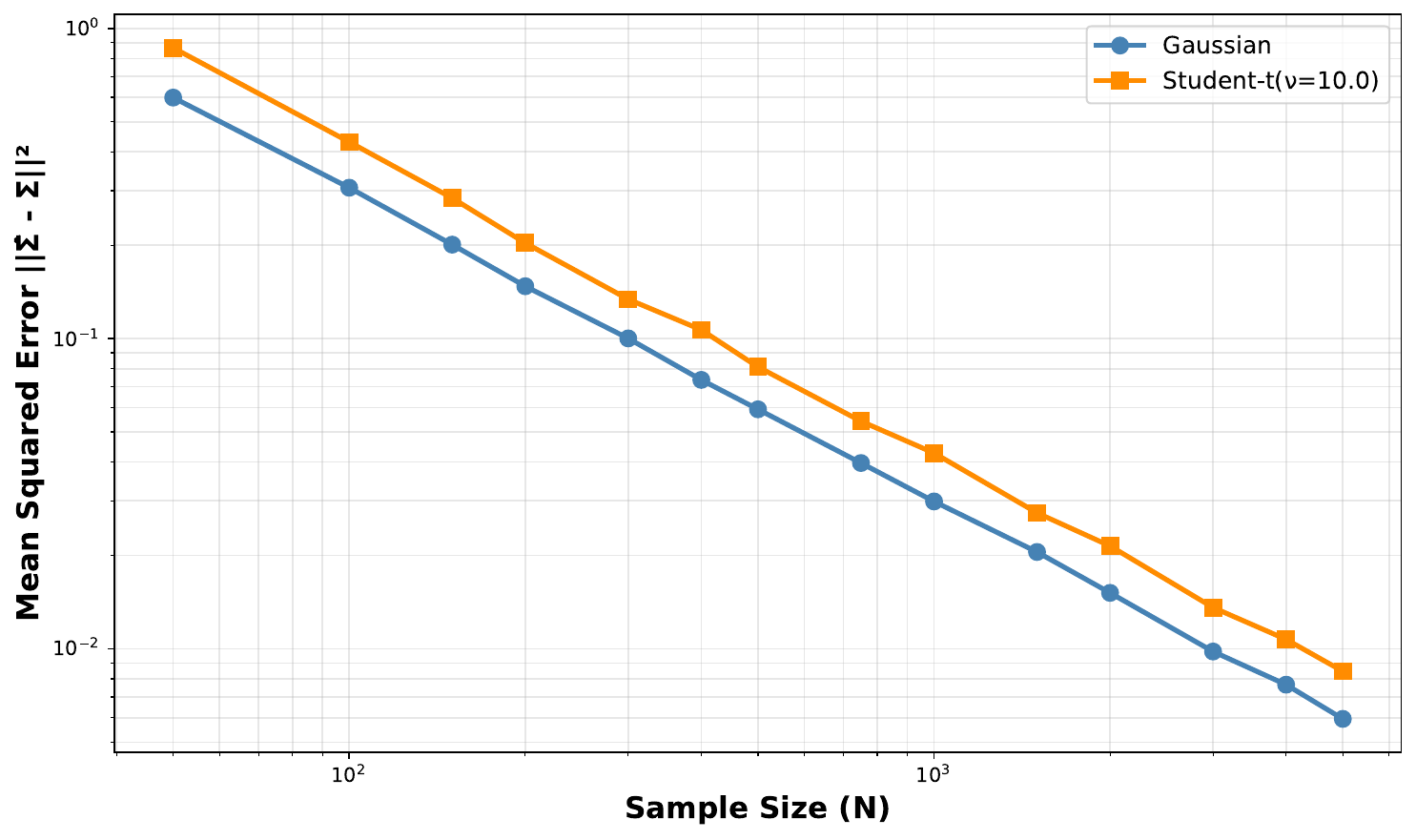}
        \caption{$\nu = 10$}
    \end{subfigure}
    \hfill
    \begin{subfigure}{0.32\textwidth}
        \centering
        \includegraphics[width=\textwidth]{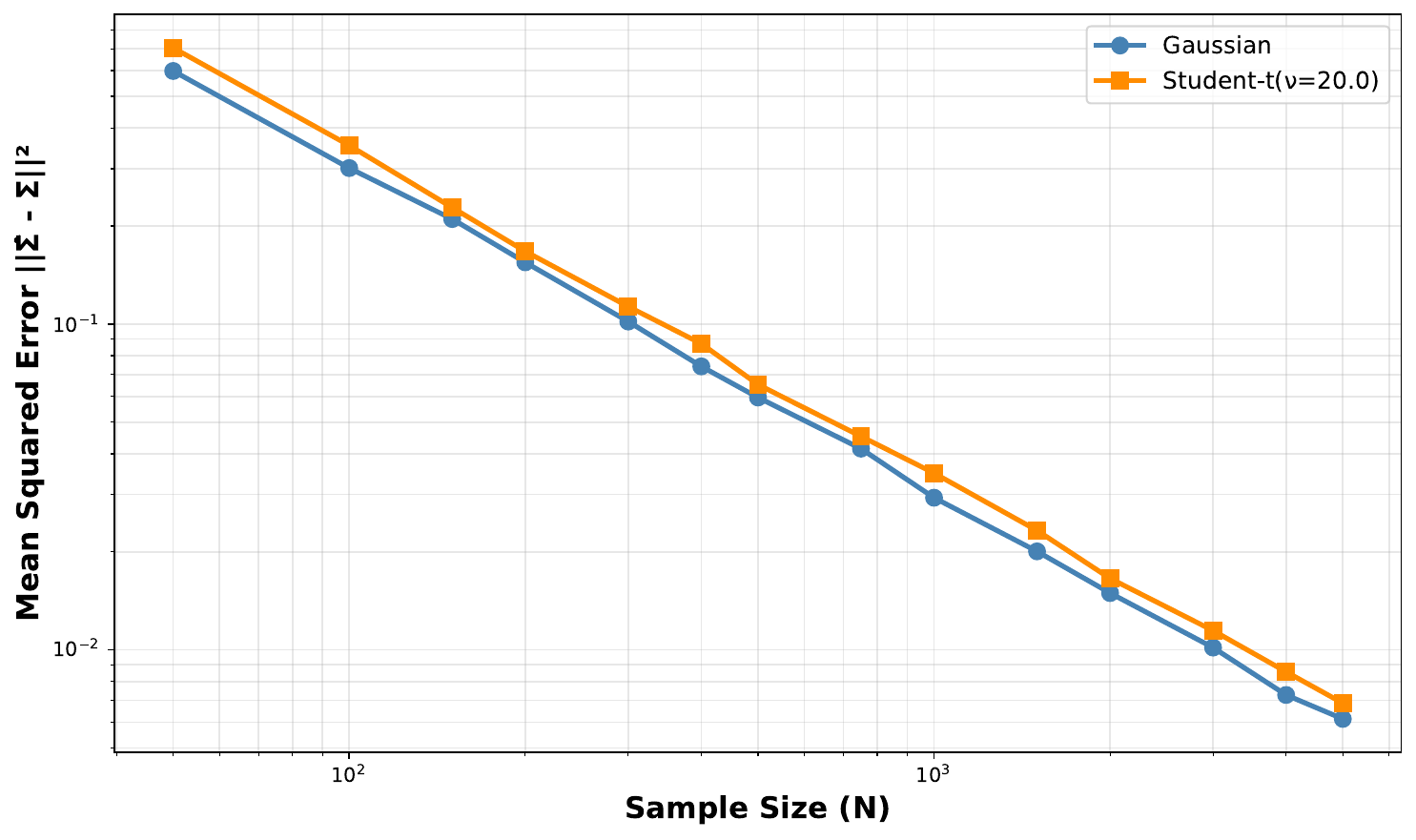}
        \caption{$\nu = 20$}
    \end{subfigure}
    \caption{
    \textbf{Convergence rate under heavy-tailed noise.}
    Log--log plots of covariance estimation error versus sample size.
    Across all degrees of freedom $\nu$, both Gaussian and Student-$t$ distributions exhibit an $O(1/N)$ convergence rate, while heavy tails induce a $\nu$-dependent constant shift that vanishes as $\nu$ increases.
    }
    \label{fig:convergence_rate}
\end{figure*}
\begin{figure*}[!htbp]
    \centering
    \includegraphics[width=0.9\linewidth]{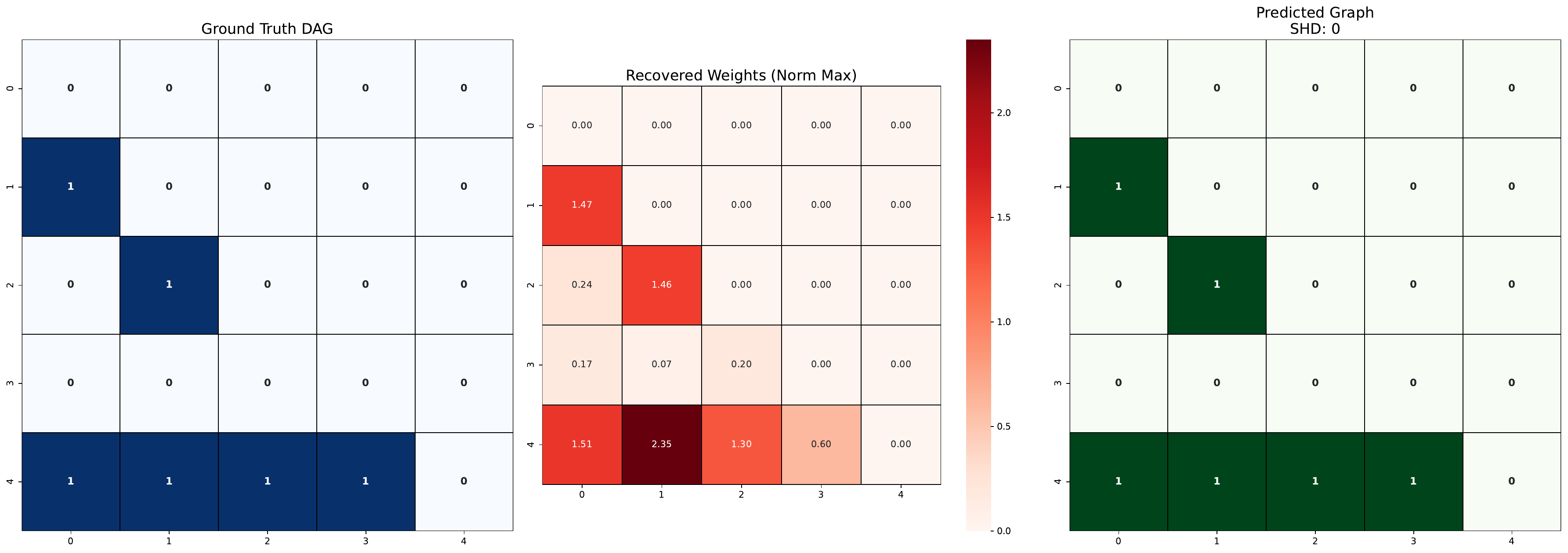}
    \caption{
    Illustrative example of causal graph recovery on a synthetic Gaussian dataset with $d=5$.
    The left panel shows the ground-truth DAG, the middle panel shows the recovered edge weights,
    and the right panel shows the thresholded predicted graph.
    This example is provided for qualitative visualization only and is not used in the theoretical
    or empirical analysis.
    }
    \label{fig:appendix_example_gaussian}
\end{figure*}
\newpage
\section{Justification for assumptions made}
\label{app:just}
\paragraph{Linear Structural Equation Models (SEMs).}
We assume a linear SEM with instantaneous effects, which is a standard abstraction in causal discovery and identifiability analysis. Linear SEMs provide a mathematically tractable framework in which causal effects, identifiability conditions, and sample complexity can be analyzed rigorously, while still capturing a wide range of real-world systems through local linearization. This assumption underlies classical and modern causal discovery methods, including LiNGAM, multi-environment linear models, and invariant prediction frameworks. Importantly, linearity is imposed on the causal mechanisms rather than on the noise distribution, allowing substantial flexibility through non-Gaussian and heavy-tailed disturbances.

\paragraph{Causal Invariance Across Environments.}
We assume that the causal mechanism, encoded by the structural matrix $B$, remains invariant across environments while the noise distribution varies. This assumption formalizes the principle of invariant causal mechanisms, which is foundational in multi-environment causal discovery. It reflects realistic settings where interventions, regime changes, or contextual shifts affect exogenous disturbances or their variances but do not alter the underlying causal structure. Without causal invariance, disentangling structural changes from distributional noise shifts is fundamentally impossible, making identifiability ill-posed.

\paragraph{Variance Heterogeneity.}
We assume that environments induce changes in the variances of the exogenous noise terms. Variance heterogeneity is one of the weakest and most practically plausible forms of distributional shift, requiring neither changes in functional form nor higher-order moment manipulation. In linear SEMs, such heteroskedasticity is known to break Markov equivalence and enable identifiability from second-order statistics alone. This assumption is particularly natural in non-stationary systems such as economics, finance, and biological processes, where volatility or noise intensity varies across time or regimes.

\paragraph{Sufficient and Insufficient Heterogeneity.}
Classical multi-environment identifiability results rely on sufficient heterogeneity, requiring that each latent component experiences distinct variance scaling across environments. However, this condition is often too strong in practice, as real-world environments may induce correlated or low-rank shifts. We therefore explicitly allow for insufficient heterogeneity and analyze its consequences. By doing so, we capture a more realistic regime where environmental variation alone is inadequate, and show how temporal aggregation can compensate for this deficiency. 

\paragraph{Faithfulness.}
We assume faithfulness between the causal graph and the induced statistical dependencies, ruling out exact cancellations of causal effects that would mask true structural relations. Faithfulness is a standard assumption in both constraint-based and score-based causal discovery, and violations correspond to measure-zero parameter configurations. Without faithfulness, no method can reliably distinguish absence of causal influence from precise algebraic cancellation, rendering causal recovery impossible even with infinite data.

\paragraph{Temporal Faithfulness.}
We further adopt a relaxed notion of temporal faithfulness, requiring faithfulness to hold at the mean of the latent process within each temporal window rather than globally. This assumption is weaker than standard faithfulness and is well-matched to time-series settings, where inference is typically performed on aggregated statistics such as windowed means or covariances. Temporal faithfulness ensures that the relevant structural information is preserved at the scale at which estimation is feasible, without requiring uniform identifiability across the entire state space.

\section{Proof of Hessian Difference Relation}

\label{appendix1}
In this section, we provide the detailed derivation for Equation (15), which relates the difference of Hessians of the log-density to the Jacobian of the mixing function. This proof is adapted from Lemma 1 of \cite{montagna2025identifiabilitycausalgraphsmultiple}, modified to align with our specific notation and the assumption of evaluating at the mean of the latent sources.
 
\begin{lemma}
    Let $\mathbf{X} = f(\mathbf{S})$ be the Structural Causal Model where $f$ is a diffeomorphism. Let $p(\mathbf{x})$ and $p^{(i)}(\mathbf{x})$ denote the observational densities in the base environment and auxiliary environment $i$, respectively. Let $\mathbf{x}^*$ be a fixed reference point such that $f^{-1}(\mathbf{x}^*) = \mu_{\mathbf{S}}$ (the mode of the latent sources). Then:
    \begin{equation}
        D_{\mathbf{x}}^2 \log p(\mathbf{x}^*) - D_{\mathbf{x}}^2 \log p^{(i)}(\mathbf{x}^*) = J_{f^{-1}}(\mathbf{x}^*)^\top \mathbf{\Omega}^{(s,i)} J_{f^{-1}}(\mathbf{x}^*)
    \end{equation}
    where $\mathbf{\Omega}^{(s,i)} = D_{\mathbf{s}}^2 \log p_{\mathbf{S}}(\mu_{\mathbf{S}}) - D_{\mathbf{s}}^2 \log p_{\mathbf{S}^{(i)}}(\mu_{\mathbf{S}})$.
\end{lemma}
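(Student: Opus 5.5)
The argument is the change-of-variables formula for densities, differentiated twice, with the Jacobian terms canceling when the two environments are subtracted. Write $g = f^{-1}$. Since $f$ is a diffeomorphism, the base and auxiliary densities satisfy
\[
\log p(\mathbf{x}) = \log p_{\mathbf{S}}(g(\mathbf{x})) + \log|\det J_g(\mathbf{x})|,
\qquad
\log p^{(i)}(\mathbf{x}) = \log p_{\mathbf{S}^{(i)}}(g(\mathbf{x})) + \log|\det J_g(\mathbf{x})|.
\]
The mixing function $f$ is the same in both environments; only the latent law changes, through $\mathbf{S}^{(i)} = L^{(i)}\mathbf{S}$. Hence the log-determinant term is \emph{identical} in the two expressions. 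Subtracting gives
\[
\log p(\mathbf{x}) - \log p^{(i)}(\mathbf{x}) = h(g(\mathbf{x})),
\qquad h(\mathbf{s}) := \log p_{\mathbf{S}}(\mathbf{s}) - \log p_{\mathbf{S}^{(i)}}(\mathbf{s}),
\]
which removes the need to differentiate the Jacobian determinant.

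\textbf{Differentiating twice.} Apply the chain rule to $h\circ g$. Componentwise,
\[
D^2_{\mathbf{x}}(h\circ g)(\mathbf{x}) = J_g(\mathbf{x})^\top D^2_{\mathbf{s}} h(g(\mathbf{x}))\, J_g(\mathbf{x}) + \sum_{m=1}^d \partial_{s_m} h(g(\mathbf{x}))\, D^2_{\mathbf{x}} g_m(\mathbf{x}).
\]
Evaluate at $\mathbf{x}^*$, where $g(\mathbf{x}^*) = \mu_{\mathbf{S}}$. The first term is exactly $J_{f^{-1}}(\mathbf{x}^*)^\top \mathbf{\Omega}^{(s,i)} J_{f^{-1}}(\mathbf{x}^*)$, by the definition of $\mathbf{\Omega}^{(s,i)}$ as the Hessian of $h$ at $\mu_{\mathbf{S}}$.

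\textbf{Main obstacle: the curvature term.} It remains to show $\nabla_{\mathbf{s}} h(\mu_{\mathbf{S}}) = 0$. This is where evaluating at the mean/mode is used. The Appendix restatement calls $\mu_{\mathbf{S}}$ the mode, so the proof assumes the latent densities are symmetric unimodal about their mean, as in the Gaussian and Student-$t$ cases with $\mu_{\mathbf{S}} = 0$.
\begin{itemize}
\item Under this assumption $\mu_{\mathbf{S}}$ is a critical point of $\log p_{\mathbf{S}}$, so $\nabla_{\mathbf{s}} \log p_{\mathbf{S}}(\mu_{\mathbf{S}}) = 0$.
\item The auxiliary density is $p_{\mathbf{S}^{(i)}}(\mathbf{s}) = p_{\mathbf{S}}((L^{(i)})^{-1}\mathbf{s})/|\det L^{(i)}|$. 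Its log-gradient is $(L^{(i)})^{-1}\nabla \log p_{\mathbf{S}}((L^{(i)})^{-1}\mathbf{s})$.
\item At a centered mean ($\mu_{\mathbf{S}} = 0$, fixed by the diagonal rescaling) this gradient equals $(L^{(i)})^{-1}\nabla\log p_{\mathbf{S}}(0) = 0$.
\end{itemize}
Thus both gradients vanish at $\mu_{\mathbf{S}}$, so $\nabla_{\mathbf{s}} h(\mu_{\mathbf{S}}) = 0$ and the second-order term disappears. (If the paper intends a non-centered mean, one instead needs the assumption that the rescaling acts about $\mu_{\mathbf{S}}$, i.e.\ $\mathbf{S}^{(i)} - \mu_{\mathbf{S}} = L^{(i)}(\mathbf{S} - \mu_{\mathbf{S}})$.)

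\textbf{Conclusion.} With the curvature term gone, the chain-rule identity evaluated at $\mathbf{x}^*$ is exactly the claimed Hessian-difference formula. The cancellation of the $\log|\det J_g|$ terms is what keeps the argument this short.
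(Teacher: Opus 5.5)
Your proposal is correct and follows the paper's proof: change of variables, the chain-rule Hessian, cancellation of the $\log|\det J_{f^{-1}}|$ term because $f$ is shared across environments, and vanishing of the first-derivative term because both source log-densities have zero gradient at $\mu_{\mathbf{S}}$. Two of your steps are only presentational refinements. You subtract before differentiating, where the paper differentiates first. You also derive the vanishing auxiliary gradient from $\mathbf{S}^{(i)} = L^{(i)}\mathbf{S}$ with a centered mean, whereas the paper simply asserts that both gradients vanish at the mode.
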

 
\begin{proof}
    By the change of variables formula for probability densities, the log-likelihood of the observed variable $\mathbf{X}$ can be written in terms of the latent variable $\mathbf{S} = f^{-1}(\mathbf{X})$ as:
    \begin{equation}
        \log p(\mathbf{x}) = \log p_{\mathbf{S}}(f^{-1}(\mathbf{x})) + \log |\det J_{f^{-1}}(\mathbf{x})|
    \end{equation}
    Applying the Hessian operator $D_{\mathbf{x}}^2$ to this expression involves the chain rule. Let $\mathbf{s} = f^{-1}(\mathbf{x})$. The Hessian expansion is given by:
    \begin{equation}
        D_{\mathbf{x}}^2 \log p(\mathbf{x}) = J_{f^{-1}}(\mathbf{x})^\top D_{\mathbf{s}}^2 \log p_{\mathbf{S}}(\mathbf{s}) J_{f^{-1}}(\mathbf{x}) + \sum_{k=1}^d \frac{\partial \log p_{\mathbf{S}}}{\partial s_k}(\mathbf{s}) D_{\mathbf{x}}^2 f_k^{-1}(\mathbf{x}) + D_{\mathbf{x}}^2 \log |\det J_{f^{-1}}(\mathbf{x})|
        \label{eq:hessian_expansion}
    \end{equation}
    Here, the first term arises from the second derivative of the outer function (source density), and the second term arises from the first derivative of the outer function multiplied by the second derivative of the inner function (inverse mixing).
 
    Now, consider the difference between the base environment (density $p$) and an auxiliary environment $i$ (density $p^{(i)}$). By Assumption 3.1, the mixing function $f$ (and thus $f^{-1}$) is invariant across environments. Therefore, the Jacobian determinant term $\log |\det J_{f^{-1}}(\mathbf{x})|$ and the Hessian of the inverse map $D_{\mathbf{x}}^2 f_k^{-1}(\mathbf{x})$ are identical for both environments.
    
    Subtracting the expanded Hessians for environment $i$ from the base environment:
    \begin{align}
        D_{\mathbf{x}}^2 \log p(\mathbf{x}) - D_{\mathbf{x}}^2 \log p^{(i)}(\mathbf{x}) &= J_{f^{-1}}(\mathbf{x})^\top \left( D_{\mathbf{s}}^2 \log p_{\mathbf{S}}(\mathbf{s}) - D_{\mathbf{s}}^2 \log p_{\mathbf{S}^{(i)}}(\mathbf{s}) \right) J_{f^{-1}}(\mathbf{x}) \nonumber \\
        &\quad + \sum_{k=1}^d \left( \frac{\partial \log p_{\mathbf{S}}}{\partial s_k}(\mathbf{s}) - \frac{\partial \log p_{\mathbf{S}^{(i)}}}{\partial s_k}(\mathbf{s}) \right) D_{\mathbf{x}}^2 f_k^{-1}(\mathbf{x})
    \end{align}
    Note that the term $D_{\mathbf{x}}^2 \log |\det J_{f^{-1}}(\mathbf{x})|$ has cancelled out.
    
    We proceed by evaluating this difference at the specific reference point $\mathbf{x}^*$ such that $\mathbf{s}^* = f^{-1}(\mathbf{x}^*) = \mu_{\mathbf{S}}$.
    For centered distributions like the Gaussian or Student's t (with $\nu > 1$), the gradient of the log-density vanishes at the mode $\mu_{\mathbf{S}}$:
    \begin{equation}
        \nabla_{\mathbf{s}} \log p_{\mathbf{S}}(\mu_{\mathbf{S}}) = \mathbf{0} \quad \text{and} \quad \nabla_{\mathbf{s}} \log p_{\mathbf{S}^{(i)}}(\mu_{\mathbf{S}}) = \mathbf{0}
    \end{equation}
    Consequently, the summation term involving the first derivatives vanishes at $\mathbf{x}^*$. We are left with only the first term:
    \begin{equation}
        D_{\mathbf{x}}^2 \log p(\mathbf{x}^*) - D_{\mathbf{x}}^2 \log p^{(i)}(\mathbf{x}^*) = J_{f^{-1}}(\mathbf{x}^*)^\top \left( D_{\mathbf{s}}^2 \log p_{\mathbf{S}}(\mathbf{s}^*) - D_{\mathbf{s}}^2 \log p_{\mathbf{S}^{(i)}}(\mathbf{s}^*) \right) J_{f^{-1}}(\mathbf{x}^*)
    \end{equation}
    By defining the diagonal matrix $\mathbf{\Omega}^{(s,i)}$ as the difference of the source Hessians:
    \begin{equation}
        \mathbf{\Omega}^{(s,i)} := D_{\mathbf{s}}^2 \log p_{\mathbf{S}}(\mu_{\mathbf{S}}) - D_{\mathbf{s}}^2 \log p_{\mathbf{S}^{(i)}}(\mu_{\mathbf{S}})
    \end{equation}
    we obtain the final result stated in Equation (15):
    \begin{equation}
        D_{\mathbf{x}}^2 \log p(\mathbf{x}^*) - D_{\mathbf{x}}^2 \log p^{(i)}(\mathbf{x}^*) = J_{f^{-1}}(\mathbf{x}^*)^\top \mathbf{\Omega}^{(s,i)} J_{f^{-1}}(\mathbf{x}^*)
    \end{equation}
\end{proof}
\section{Proof of temporal-environmental identifiability bound}
\label{appendix_temp_iden}
\begin{theorem}
    Consider a time series Structural Causal Model $X_t = f(S_t)$ with $d$ variables, where the mixing function $f$ is time-invariant within a computational window of size $T$. Let there be $k$ auxiliary environments exhibiting \textbf{insufficient heterogeneity} (Definition \ref{suffinsuffhetero}), such that the environmental variance shifts at any single time step identify a subspace of rank at most $r < d$.
    
    Suppose the Jacobian $J_{f^{-1}}$ satisfies \textbf{Temporal Faithfulness} (Definition \ref{tempfaith}) and the variance profiles satisfy the \textbf{Distinct Variance Profiles} (Definition \ref{distvarianceprofs}) assumption. Then, no method relying on second order moments can identify the causal graph unless:
    \begin{equation}
        T \ge \left\lceil \frac{d}{r} \right\rceil
    \end{equation}
    Moreover, this bound is tight under generic temporal variance diversity, i.e., when each timestep contributes a rank-x
     variance subspace in general position so that the accumulated span grows maximally with time.
\end{theorem}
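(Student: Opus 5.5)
The plan is to reduce identifiability to a rank condition on the stacked Hessian differences supplied by Lemma \ref{lemma1}, and then count dimensions.

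\textbf{Step 1: per-time-step constraints.} For each time step $t$ and environment $i$, Lemma \ref{lemma1} evaluated at $\mathbf{x}^*_t$, with $f^{-1}(\mathbf{x}^*_t)=\mu_{S_t}$, gives
\[
\Delta H_t^{(i)} = J^\top \Omega_t^{(s,i)} J .
\]
Here $J=J_{f^{-1}}(\mathbf{x}^*_t)$ is time-invariant on the window, because $f$ is. Temporal Faithfulness (Definition \ref{tempfaith}) ensures that $\mathrm{supp}(J)$ equals the graph support at these points.

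For Gaussian or Student's t sources, $\Omega_t^{(s,i)}$ is diagonal, with entries proportional to $\sigma_{j,0}^{-2}-\sigma_{j,i}^{-2}$. The Student's t case only adds a scalar factor, which does not change the rank. Insufficient heterogeneity means these diagonals span, over $i$, a set of coordinates $D_t \subseteq [d]$ with $|D_t| \le r$. Let $D_t$ denote the set of latent coordinates $j$ whose variance actually shifts at time $t$.

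\textbf{Step 2: necessity.} Suppose the union $U_T=\bigcup_{t\le T} D_t$ misses at least one coordinate $j$. I would build an alternative model $\hat J = R J$, where $R$ acts nontrivially only on the block of unshifted coordinates. For instance, $R$ could be a rotation or shear mixing $e_j$ with another coordinate $e_l$ whose shifts are proportional, or which is also unshifted. I would choose $R$ to preserve every $\Omega_t^{(s,i)}$ via $R^\top\Omega R=\Omega$, and to preserve the base-environment second moments, i.e.\ $J^{-1}\Sigma_0 J^{-\top}$ is unchanged.

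Then every second-order quantity, namely all covariances and Hessian differences, coincides between the two models, while $\mathrm{supp}(\hat J)\ne\mathrm{supp}(J)$ for a generic rotation angle. So no second-order method can identify the graph unless $U_T=[d]$. Each $D_t$ contributes at most $r$ new coordinates, so $|U_T|\le rT$, and full coverage forces $rT\ge d$, i.e.\ $T\ge\lceil d/r\rceil$.

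\textbf{Step 3: tightness.} Take $T=\lceil d/r\rceil$ and choose the $D_t$ to be disjoint blocks covering $[d]$, which is the generic maximal-growth case. Aggregate the differences into $\Psi_{TE}=\sum_{t,i} c_{t,i}\,\Delta H_t^{(i)} = J^\top \Lambda J$ with $\Lambda$ diagonal. Using two disjoint window groups $A,B$, I would form the matrix pencil $(J^\top\Lambda_A J,\, J^\top \Lambda_B J)$. The generalized eigenproblem then has eigenvalues $\Lambda_A\Lambda_B^{-1}$, which are distinct by Distinct Variance Profiles (Definition \ref{distvarianceprofs}). By the standard joint-diagonalization uniqueness argument, this determines the rows of $J$ up to scaling and permutation. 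The permutation is then fixed by the acyclicity of $B$ (there is a unique lower-triangular ordering), and so the support is recovered.

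\textbf{Main obstacle.} The delicate point is Step 2. The coordinate $j$ is not completely uninformative: the base covariance $\Sigma_0=J^{-1}D J^{-\top}$ still constrains $J$. So the alternative $R$ must be chosen inside the stabilizer of both $\Sigma_0$ and all $\Omega_t^{(s,i)}$. This stabilizer is an orthogonal group on the unshifted block, which is nontrivial only when at least two coordinates are unshifted or share proportional profiles.

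If only a single coordinate is uncovered, I would instead argue a counting bound. The unknowns in $J$ modulo scaling number $d^2-d$. The constraints per time step come from a rank-$r$ difference, so each step supplies at most $r$ independent diagonal directions. Covering all $d$ directions needed for the joint diagonalization therefore requires at least $\lceil d/r\rceil$ steps. I expect to present this dimension-counting argument as the core of necessity, and the explicit non-identifiable construction as its illustration.
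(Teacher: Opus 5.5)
Your Step 2 construction is the right kind of argument for a ``no second-order method'' claim; the paper's own proof only shows that one particular aggregate, $\Psi_{TE}$, is singular. But the construction covers only the case you flagged. It gives an observationally equivalent model with different Jacobian support when at least two latent coordinates are never shifted. Even then it needs their variances to move proportionally across all time steps, since otherwise cross-time comparisons of the base covariances break the orthogonal symmetry. With $|U_T|\le rT$ this yields only the necessary condition $rT\ge d-1$, i.e.\ $T\ge\lceil (d-1)/r\rceil$. That is strictly weaker than $\lceil d/r\rceil$ whenever $d\equiv 1\pmod r$ (for every $d$ when $r=1$). Your patch for a single uncovered coordinate, the count ``each step supplies at most $r$ diagonal directions,'' is exactly the paper's rank-subadditivity bound $\mathrm{rank}(\sum_{s,i}\Omega^{(s,i)})\le Tr$. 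It has the same defect: it shows the \emph{summed Hessian differences} are rank-deficient, which rules out only methods that see nothing else. As you observed, the base precision $J^\top\Sigma_0^{-1}J$ is also a second-order quantity, and it is always invertible.

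In fact no argument can close this case, because identifiability holds there. Write $\sigma^2_{j,(t,i)}$ for the variance of $\epsilon_j$ in cell $(t,i)$. The generalized eigenvectors of the pencil $(J^\top\Sigma_0^{-1}J,\;J^\top\Sigma_{t,i}^{-1}J)$ are the columns $Ae_j$, with eigenvalues $\sigma^2_{j,0}/\sigma^2_{j,(t,i)}$. A single uncovered coordinate with constant variance has the all-ones profile over $(t,i)$, which differs from that of every covered coordinate. So, given distinct profiles among the covered ones, every joint eigenspace is one-dimensional, $A$ is recovered up to scaling and permutation, and acyclicity fixes the permutation. The smallest instance is $d=2$, $r=1$, $T=1$, with only $\epsilon_1$ rescaled by some $\lambda_1$ with $\lambda_1^2\neq 1$. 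Every hypothesis holds: coordinate $2$ has insufficient heterogeneity, and the variance ratios $\lambda_1^{2}$ and $1$ are distinct. Yet $X_1\to X_2$ is identified, because the regression of $X_2$ on $X_1$ is invariant across environments while that of $X_1$ on $X_2$ is not, even though $\lceil d/r\rceil=2$. So your approach can establish only the weaker bound $T\ge\lceil (d-1)/r\rceil$ under the proportionality hypothesis, not the stated one.

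A smaller issue in Step 3: if the blocks $D_t$ are disjoint and $A,B$ are temporal windows, then $\Lambda_A$ and $\Lambda_B$ are both singular and $\Lambda_A\Lambda_B^{-1}$ is undefined. Taking the base precision as the reference element of the pencil avoids this, and makes the eigenvalues literally the variance ratios of Definition~\ref{distvarianceprofs}.
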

 
\begin{proof}
    The proof proceeds by constructing a sufficient statistic, inspired from the work of \cite{montagna2025identifiabilitycausalgraphsmultiple}, that aggregates second-order information across the temporal window to recover the full rank of the mixing mechanism. Since the mixing function $f$ is time-invariant within the window $[t-T, t]$, the Jacobian $J_{f^{-1}}(x)$ evaluated at a fixed reference point $x^*$ remains constant. This allows us to accumulate structural constraints over time.
    
    We begin by examining the Hessian of the log-density. Under the assumption of an ICA model with independent sources, the Hessian of the observational log-density $\log p_X(x)$ relates to the source density via the chain rule. Specifically, evaluating the difference between the Hessians of the base environment and an auxiliary environment $i$ at a fixed point $x^*$ yields the relation by lemma \ref{lemma1}:
    \begin{equation}
        D_x^2 \log p(x^*) - D_x^2 \log p^{(i)}(x^*) = J_{f^{-1}}(x^*)^\top \Omega^{(s,i)} J_{f^{-1}}(x^*)
    \end{equation}
    where $\Omega^{(s,i)} \in \mathbb{R}^{d \times d}$ is a diagonal matrix representing the shift in the precision (inverse variance) of the latent sources at time $s$ in environment $i$.
    
    To leverage temporal information, we define the \textit{Temporal-Environmental Hessian} $\Psi_{TE}$ by aggregating these differences over the computational window $T$ and across all $k$ environments:
    \begin{equation}
        \Psi_{TE} = \sum_{s=t-T}^{t} \sum_{i=1}^{k} \left( D_x^2 \log p(x^*) - D_x^2 \log p^{(i)}(x^*) \right)
    \end{equation}
    Substituting the Jacobian relation into the sum and factoring out the common matrix $J_{f^{-1}}(x^*)$, we obtain:
    \begin{equation}
        \Psi_{TE} = J_{f^{-1}}(x^*)^\top \left( \sum_{s=t-T}^{t} \sum_{i=1}^{k} \Omega^{(s,i)} \right) J_{f^{-1}}(x^*)
    \end{equation}
    
    Let $\Lambda_{\text{total}} = \sum_{s,i} \Omega^{(s,i)}$ denote the central diagonal matrix. The invertibility of $\Psi_{TE}$ depends entirely on whether $\Lambda_{\text{total}}$ attains full rank $d$. By the definition of \textit{insufficient heterogeneity}, the information contribution from any single time step $s$ is rank-deficient, bounded by $\text{rank}(\sum_{i} \Omega^{(s,i)}) \le r < d$.
    
    We thereby derive a fundamental lower bound on the computational window size. Utilizing the sub-additivity of rank, the rank of the aggregated matrix is bounded by the sum of the individual ranks. Even in the \textit{optimal scenario} where the variance profiles at distinct time steps lie in strictly disjoint subspaces and thus provide maximal information gain, the accumulated rank cannot exceed the linear sum of the inputs:
    \begin{equation}
        \text{rank}(\Lambda_{\text{total}}) \le \sum_{s=1}^T \text{rank}\left(\sum_{i=1}^k \Omega^{(s,i)}\right) \le T \cdot r
    \end{equation}
    This inequality implies that regardless of the temporal diversity quality, a window size of $T < d/r$ fundamentally precludes identifiability. Consequently, satisfying $T \ge \lceil d/r \rceil$ is a necessary (but not sufficient) condition to span the full $d$-dimensional space. Only in the most optimal case where each step contributes maximal novel information, this window size is sufficient to ensure $\text{rank}(\Lambda_{\text{total}}) = d$, making $\Psi_{TE}$ invertible.
    
    With invertibility established, we employ joint diagonalization. Partitioning the aggregated information into two groups yields two invertible matrices $\Psi_{TE,1}$ and $\Psi_{TE,2}$. The product $M = \Psi_{TE,1}^{-1} \Psi_{TE,2}$ shares the same eigenvectors as the system:
    \begin{equation}
        M = J^{-1} (\Lambda_1^{-1} \Lambda_2) J
    \end{equation}
    By the assumption of \textit{Distinct Variance Profiles}, the eigenvalues (diagonal entries of $\Lambda_1^{-1} \Lambda_2$) are distinct, ensuring that the eigendecomposition of $M$ uniquely recovers the columns of $J_{f}(x^*)$ up to permutation and scaling. Finally, the \textit{Temporal Faithfulness} of the Jacobian ensures that the support of the recovered matrix corresponds uniquely to the edges of the causal graph $\mathcal{G}$.
\end{proof}
\section{Extension of identifiability proof to Multivariate Student's t Noise}
\textbf{Heavy-Tailed Sampling (Student's t).} For the heavy-tailed regime, we 
model the noise $\epsilon_t$ using a Multivariate Student's t distribution with 
degrees of freedom $\nu$ and \textbf{diagonal} scale matrix 
$\Sigma = \text{diag}(\sigma_{1,t}^{(e)2}, \ldots, \sigma_{d,t}^{(e)2})$.

\textbf{Remark on ICA Compatibility:} While this distribution does not have 
fully independent components (due to shared tail dependence), it satisfies 
the requirements for our identifiability theory. Specifically, our proof 
(Theorem 4.2) only requires that the Hessian of the log-density is diagonal 
when evaluated at the mode $\mu_S = 0$ (Temporal Faithfulness, Definition 3.7). 
The multivariate Student's t with diagonal $\Sigma$ satisfies this property:
\begin{equation}
D^2_s \log p_S(s)\big|_{s=0} = -\frac{\nu+d}{\nu}\Sigma^{-1} \quad \text{(diagonal)}
\end{equation}
while also providing the heavy-tailed behavior necessary for our finite-sample 
analysis.
\label{app:student_t_identifiability}
 
\textbf{Theorem.} \textit{The necessary identifiability condition $T \ge \lceil d/r \rceil$ derived in Theorem \ref{thm:temporal_identifiability} holds invariant for latent sources following a Multivariate Student's t distribution.}
 
\begin{proof}
Recall from Lemma 4.1 that structural identifiability relies on the rank of the aggregated Hessian difference matrix $\Psi_{TE} = J^{\top} \left( \sum_{s,i} \Omega^{(s,i)} \right) J$. A sufficient condition for unique recovery is that the central matrix of precision shifts $\sum \Omega^{(s,i)}$ attains full rank $d$. We show here that the geometric structure of $\Omega^{(s,i)}$ under Student's t noise is identical to the Gaussian case up to a non-zero scalar factor.
 
Let the noise vector $\epsilon_t$ follow a zero-mean multivariate Student's t-distribution $t_{\nu}(0, \Sigma)$ as defined in Eq. (4). The log-likelihood is given by:
\begin{equation}
    \log p(\epsilon) = C - \frac{\nu+d}{2} \log \left( 1 + \frac{1}{\nu} \epsilon^{\top} \Sigma^{-1} \epsilon \right)
\end{equation}
We evaluate the Hessian $H(\epsilon) = \nabla^2 \log p(\epsilon)$ at the mode $\epsilon^* = 0$ to satisfy the vanishing gradient condition of Lemma A.1. First, the score function is:
\begin{equation}
    \nabla \log p(\epsilon) = - \frac{\nu+d}{2} \cdot \frac{1}{1 + \frac{1}{\nu}\epsilon^{\top}\Sigma^{-1}\epsilon} \cdot \frac{2}{\nu} \Sigma^{-1} \epsilon
\end{equation}
At the mode $\epsilon = 0$, the gradient vanishes ($\nabla \log p(0) = 0$). Differentiating again to find the Hessian at $\epsilon=0$:
\begin{equation}
    H(0) = - \frac{\nu+d}{\nu} \Sigma^{-1}
\end{equation}
Now, consider the Hessian difference matrix $\Omega^{(s,i)}$ between the base environment ($u=0$) and auxiliary environment ($u=i$) at time $s$. Let $\Sigma_{(0)}$ and $\Sigma_{(i)}$ be the diagonal scale matrices of the latent sources in these environments.
\begin{align}
    \Omega^{(s,i)} &= H_{(0)}(0) - H_{(i)}(0) \\
    &= \left( - \frac{\nu+d}{\nu} \Sigma_{(0)}^{-1} \right) - \left( - \frac{\nu+d}{\nu} \Sigma_{(i)}^{-1} \right) \\
    &= \frac{\nu+d}{\nu} \left( \Sigma_{(i)}^{-1} - \Sigma_{(0)}^{-1} \right)
\end{align}
Let $\Lambda^{(s,i)} = \Sigma_{(i)}^{-1} - \Sigma_{(0)}^{-1}$ be the matrix of precision shifts. The structural information provided by this environment is:
\begin{equation}
    \Omega^{(s,i)} \propto \Lambda^{(s,i)}
\end{equation}
Since $\frac{\nu+d}{\nu}$ is a strictly non-zero scalar constant for $\nu > 0$, the rank of $\Omega^{(s,i)}$ is determined entirely by the rank of the precision shift matrix $\Lambda^{(s,i)}$. This is the exact same rank condition as in the Gaussian case (where the scalar is 1). Consequently, the accumulation of rank over time follows, and the identifiability threshold $T \ge \lceil d/r \rceil$ remains necessary.
\end{proof}
\section{Fourth-Order Mixed Moments of the Multivariate Student-$t$ Distribution}
\label{appendix2}
\noindent \textbf{Proposition.} \textit{Let $X \sim t_\nu(0, \Sigma)$ be a multivariate Student-$t$ distributed random vector with degrees of freedom $\nu > 4$ and covariance matrix $\Sigma$. Assuming uncorrelated components (or $j=k$), the fourth-order mixed moments satisfy:}
\begin{equation}
    E[X_j^2 X_k^2] = \frac{\nu-2}{\nu-4} (1 + 2\delta_{jk}) \Sigma_{jj}\Sigma_{kk}
\end{equation}
 
\begin{proof}
\noindent \textbf{1. Stochastic Representation} \\
The vector $X$ admits the representation as a Gaussian scale mixture:
\begin{equation}
    X \overset{d}{=} \sqrt{Y} Z
\end{equation}
where:
\begin{itemize}
    \item $Z \sim \mathcal{N}(0, S)$ is a multivariate Gaussian with scale matrix $S$.
    \item $Y = \nu W^{-1}$ where $W \sim \chi^2_\nu$ is independent of $Z$.
\end{itemize}
 
\noindent \textbf{2. Covariance Normalization} \\
We first relate the scale matrix $S$ to the true covariance $\Sigma$.
\begin{equation}
    \Sigma = Cov(X) = E[X X^\top] = E[Y] E[Z Z^\top] = E[Y] S
\end{equation}
For $W \sim \chi^2_\nu$, the first inverse moment is $E[W^{-1}] = (\nu-2)^{-1}$. Thus:
\begin{equation}
E[Y] = \nu E[W^{-1}] = \frac{\nu}{\nu-2} \quad \implies \quad S = \frac{\nu-2}{\nu} \Sigma
\end{equation}
 
\noindent \textbf{3. Moments of the Mixing Variable} \\
For the fourth moment of $X$, we require the second moment of $Y$:
\begin{equation}
    E[Y^2] = \nu^2 E[W^{-2}] = \frac{\nu^2}{(\nu-2)(\nu-4)}
\end{equation}
 
\noindent \textbf{4. Gaussian Fourth Moments} \\
By Isserlis' Theorem (Wick's formula) for zero-mean Gaussian vector $Z$:
\begin{equation}
    E[Z_j^2 Z_k^2] = S_{jj}S_{kk} + 2S_{jk}^2
\end{equation}
 
\noindent \textbf{5. Derivation of Student-$t$ Moments} \\
Exploiting the independence of $Y$ and $Z$:
\begin{align}
    E[X_j^2 X_k^2] &= E\left[(\sqrt{Y} Z_j)^2 (\sqrt{Y} Z_k)^2\right] \nonumber \\
    &= E[Y^2] \cdot E[Z_j^2 Z_k^2] \nonumber \\
    &= \frac{\nu^2}{(\nu-2)(\nu-4)} \left( S_{jj}S_{kk} + 2S_{jk}^2 \right)
\end{align}
Substituting $S = \frac{\nu-2}{\nu} \Sigma$ from Step 2:
\begin{align}
    E[X_j^2 X_k^2] &= \frac{\nu^2}{(\nu-2)(\nu-4)} \left[ \left(\frac{\nu-2}{\nu}\Sigma_{jj}\right)\left(\frac{\nu-2}{\nu}\Sigma_{kk}\right) + 2\left(\frac{\nu-2}{\nu}\Sigma_{jk}\right)^2 \right] \nonumber \\
    &= \frac{\nu^2}{(\nu-2)(\nu-4)} \cdot \left(\frac{\nu-2}{\nu}\right)^2 \left[ \Sigma_{jj}\Sigma_{kk} + 2\Sigma_{jk}^2 \right] \nonumber \\
    &= \frac{\nu-2}{\nu-4} \left[ \Sigma_{jj}\Sigma_{kk} + 2\Sigma_{jk}^2 \right]
\end{align}
Under the assumption that $\Sigma_{jk} = 0$ for $j \neq k$ (or considering the diagonal case $j=k$), we utilize the Kronecker delta $\delta_{jk}$ to obtain the final form:
\begin{equation}
    E[X_j^2 X_k^2] = \frac{\nu-2}{\nu-4} (1 + 2\delta_{jk}) \Sigma_{jj}\Sigma_{kk}
\end{equation}
\end{proof}
\section{Frobenius Norm Error Bound via Fourth-Moment Contraction}
\label{appendix3}
\noindent \textbf{Proposition.} \textit{Let $\hatSigma_N$ be the sample covariance estimator constructed from $N$ independent samples of a zero-mean random vector $X \in \mathbb{R}^d$ with covariance $\bSigma$. Assuming the distribution follows an elliptical model with kurtosis parameter $\kappa = \frac{\nu-2}{\nu-4}$ (where $\kappa=1$ for Gaussian), the expected squared Frobenius norm error is given by:}
\begin{equation}
    \E\left[ \|\hatSigma_N - \bSigma\|_F^2 \right] = \frac{1}{N} \left[ \kappa (\Tr(\bSigma))^2 + (2\kappa - 1) \|\bSigma\|_F^2 \right]
\end{equation}
\textit{This demonstrates that the convergence rate is governed by the sample size $N$ and scaled by the kurtosis parameter $\kappa$ and the spectral profile of $\bSigma$.}
 
\begin{proof}
\noindent \textbf{1. Decomposition of the Frobenius Norm Error} \\
The squared Frobenius norm of the error matrix is the sum of the squared errors of its individual entries. By linearity of expectation:
\begin{equation}
    \E\left[ \|\hatSigma_N - \bSigma\|_F^2 \right] = \E\left[ \sum_{j,k} (\hat{\sigma}_{jk} - \sigma_{jk})^2 \right] = \sum_{j,k} \Var(\hat{\sigma}_{jk})
\end{equation}
where $\hat{\sigma}_{jk} = \frac{1}{N} \sum_{i=1}^N X_{ij}X_{ik}$ is the sample covariance element for dimensions $j$ and $k$.
 
\noindent \textbf{2. Variance of a Single Entry} \\
Since samples are i.i.d., the variance of the mean estimator scales with $1/N$:
\begin{equation}
    Var(\hat{\sigma}_{jk}) = \frac{1}{N} \Var(X_j X_k) = \frac{1}{N} \left( \E[X_j^2 X_k^2] - (\E[X_j X_k])^2 \right)
\end{equation}
We know that $\E[X_j X_k] = \sigma_{jk}$. The fourth-order moment $\E[X_j^2 X_k^2]$ depends on the kurtosis of the distribution.
 
\noindent \textbf{3. Fourth-Moment Tensor Contraction} \\
Using the general fourth-moment relation derived for elliptical distributions (Appendix B), the mixed moment is:
\begin{equation}
    \E[X_j^2 X_k^2] = \kappa \left( \sigma_{jj}\sigma_{kk} + 2\sigma_{jk}^2 \right)
\end{equation}
Substituting this back into the variance expression:
\begin{align}
    \Var(\hat{\sigma}_{jk}) &= \frac{1}{N} \left[ \kappa (\sigma_{jj}\sigma_{kk} + 2\sigma_{jk}^2) - \sigma_{jk}^2 \right] \nonumber \\
    &= \frac{1}{N} \left[ \kappa \sigma_{jj}\sigma_{kk} + (2\kappa - 1) \sigma_{jk}^2 \right]
\end{align}
 
\noindent \textbf{4. Summation Over the Full Matrix} \\
To obtain the full Frobenius error, we sum the variance expression over all indices $j, k \in \{1, \dots, d\}$:
\begin{equation}
    \E\left[ \|\hatSigma_N - \bSigma\|_F^2 \right] = \frac{1}{N} \left[ \sum_{j,k} \kappa \sigma_{jj}\sigma_{kk} + \sum_{j,k} (2\kappa - 1) \sigma_{jk}^2 \right]
\end{equation}
 
\noindent \textbf{5. Identification of Matrix Norms} \\
We identify the tensor contractions with standard matrix invariants:
\begin{itemize}
    \item The first term decouples: $\sum_{j,k} \sigma_{jj}\sigma_{kk} = (\sum_j \sigma_{jj})(\sum_k \sigma_{kk}) = (\Tr(\bSigma))^2$.
    \item The second term is the definition of the Frobenius norm: $\sum_{j,k} \sigma_{jk}^2 = \|\bSigma\|_F^2$.
\end{itemize}
Substituting these into the summation:
\begin{equation}
    \E\left[ \|\hatSigma_N - \bSigma\|_F^2 \right] = \frac{1}{N} \left[ \kappa (\Tr(\bSigma))^2 + (2\kappa - 1) \|\bSigma\|_F^2 \right]
\end{equation}
 
\noindent \textbf{6. Analysis of Dominant Scaling} \\
For heavy-tailed distributions where $\nu \to 4^+$, the parameter $\kappa \to \infty$. The error is thus dominated by the kurtosis term:
\begin{equation}
    \lim_{\kappa \to \infty} \E\left[ \|\hatSigma_N - \bSigma\|_F^2 \right] \approx \frac{\kappa}{N} \left( (\Tr(\bSigma))^2 + 2\|\bSigma\|_F^2 \right)
\end{equation}
This confirms that the convergence rate is dictated by the kurtosis inflation factor $\kappa$, requiring $N \gg \kappa$ to ensure concentration.
\end{proof}
 
\section{Local Asymptotic Normality of the Sample Covariance}
\label{app:lan}

\begin{lemma}[LAN for the Sample Covariance]
\label{lem:statistic_LAN}
Let $\{X_i\}_{i=1}^N$ be i.i.d.\ random vectors in $\mathbb{R}^d$ with distribution
$P_\Sigma$, mean zero, and covariance $\Sigma$. Assume the family
$\{P_\Sigma : \Sigma \in \mathcal{S}_{++}^d\}$ is Differentiable in Quadratic Mean
(DQM) and regular in the sense of Le Cam.

Define the sample covariance
\[
\hat{\Sigma}_N = \frac{1}{N} \sum_{i=1}^N X_i X_i^\top .
\]
Then the sequence of experiments generated by observing $\hat{\Sigma}_N$ is
locally asymptotically normal. In particular,
\[
\sqrt{N}\,\mathrm{vec}(\hat{\Sigma}_N - \Sigma)
\;\Rightarrow\;
\mathcal{N}\!\left(0, \Gamma_\nu(\Sigma)\right),
\]
where $\Gamma_\nu(\Sigma) = \mathrm{Cov}(\mathrm{vec}(X X^\top))$.

Moreover, for local alternatives $\Sigma_k = \Sigma_j + \Delta/\sqrt{N}$,
the Kullback--Leibler divergence satisfies
\begin{equation}
\label{eq:lan_kl}
D_{\mathrm{KL}}\!\left(
P_{\Sigma_j}^N \,\|\, P_{\Sigma_k}^N
\right)
=
\frac{N}{2}
\langle
\Delta, [\Gamma_\nu(\Sigma_j)]^{-1} \Delta
\rangle
+ o(1).
\end{equation}
\end{lemma}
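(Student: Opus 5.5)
The plan is to obtain the statement in three stages: a multivariate CLT for the statistic, identification of the limiting experiment as a Gaussian shift, and a KL computation in that limit followed by a transfer back to finite $N$.

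\emph{Stage 1: CLT.} Write $\hat{\Sigma}_N - \Sigma = \frac{1}{N}\sum_{i=1}^N (X_iX_i^\top - \Sigma)$. This is an average of i.i.d.\ mean-zero matrices, so the multivariate CLT applies once $\mathrm{vec}(XX^\top)$ has finite second moments. This requires $\E[X_j^2X_k^2]<\infty$, which holds for $\nu>4$. Moreover, Proposition \ref{prop2}, in its general form $\kappa(\Sigma_{jj}\Sigma_{kk}+2\Sigma_{jk}^2)$ derived in Appendix \ref{appendix2}, gives $\Gamma_\nu(\Sigma)=\mathrm{Cov}(\mathrm{vec}(XX^\top))$ explicitly. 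This yields $\sqrt{N}\,\mathrm{vec}(\hat{\Sigma}_N-\Sigma)\Rightarrow\mathcal{N}(0,\Gamma_\nu(\Sigma))$.

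One technical point must be handled here. $\Gamma_\nu$ is singular on $\mathbb{R}^{d^2}$ because $XX^\top$ is symmetric. I would therefore work with $\mathrm{vech}$, or equivalently read $[\Gamma_\nu]^{-1}$ as the inverse on the symmetric subspace. Positive definiteness on that subspace follows from the elliptical structure: the fourth-moment tensor is $\kappa$ times the Gaussian one plus a rank-one term in $\mathrm{vec}(\Sigma)$, and the Gaussian tensor $2\Sigma\otimes\Sigma$ (symmetrized) is nondegenerate.

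\emph{Stage 2: limiting experiment.} Under the local parametrization $\Sigma_k=\Sigma_j+\Delta/\sqrt{N}$, the rescaled statistic $\sqrt{N}(\hat{\Sigma}_N-\Sigma_j)$ under $P_{\Sigma_k}$ is asymptotically $\mathcal{N}(\Delta,\Gamma_\nu(\Sigma_k))$. The DQM/regularity hypothesis gives continuity of $\Sigma\mapsto\Gamma_\nu(\Sigma)$, which is polynomial in $\Sigma$ by the moment formula. Hence $\Gamma_\nu(\Sigma_k)\to\Gamma_\nu(\Sigma_j)$. The statistic experiments therefore converge, in Le Cam's sense, to the Gaussian shift $\{\mathcal{N}(\Delta,\Gamma_\nu(\Sigma_j))\}_\Delta$, which is the LAN claim. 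Its log-likelihood ratio is $\langle\Delta,\Gamma^{-1}Z\rangle-\tfrac12\langle\Delta,\Gamma^{-1}\Delta\rangle$.

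\emph{Stage 3: KL.} In the limit experiment the KL divergence is the Gaussian value $\tfrac12\langle\Delta,\Gamma_\nu^{-1}\Delta\rangle$. When this is rewritten in terms of the unscaled difference $\tilde\Delta=\Sigma_k-\Sigma_j=\Delta/\sqrt N$, it reads $\frac N2\langle\tilde\Delta,\Gamma_\nu^{-1}\tilde\Delta\rangle$. I would state explicitly that \eqref{eq:lan_kl} is meant in this sense, with $\Delta$ there denoting the actual matrix difference, so that the two displays are consistent.

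\emph{Main obstacle.} Weak convergence of experiments does not by itself give convergence of KL, since KL is only lower semicontinuous. To obtain the equality with $o(1)$ remainder, I would first note that for $N\ge d$ the statistic $\hat{\Sigma}_N$ has a density, and then establish a local limit theorem (an Edgeworth-type density CLT) for $\sqrt N\,\mathrm{vech}(\hat{\Sigma}_N-\Sigma)$. Combined with uniform integrability of the log-likelihood ratios, this would give $L^1$ convergence of $\log(dP^N_{\Sigma_j}/dP^N_{\Sigma_k})$ to its Gaussian counterpart. The needed moments of the likelihood ratio are where $\nu>4$ is tight, and I expect verifying this uniform integrability to be the hardest step. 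If it fails, the fallback is to prove only the inequality $\liminf D_{\mathrm{KL}}\ge$ the Gaussian value. However, Step 3 of Theorem \ref{thm:second_order_lower_bound} needs the upper bound, so this fallback would not suffice there.
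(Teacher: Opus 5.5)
Your three stages match the paper's proof: a CLT for $Z_i=\mathrm{vec}(X_iX_i^\top)$ using $\nu>4$, convergence of the $\hat\Sigma_N$-experiments to the Gaussian shift $Y=h+W$ with $W\sim\mathcal{N}(0,\Gamma_\nu)$, and the Gaussian-shift value $\tfrac12 h^\top\Gamma_\nu^{-1}h$. Two of your additions are real corrections that the paper skips. First, $\Gamma_\nu=\kappa(I+K)(\Sigma\otimes\Sigma)+(\kappa-1)\,\mathrm{vec}(\Sigma)\mathrm{vec}(\Sigma)^\top$, with $K$ the commutation matrix and $\kappa=\frac{\nu-2}{\nu-4}$, annihilates $\mathrm{vec}$ of every antisymmetric matrix. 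Its inverse therefore exists only on the symmetric subspace, and your positive-definiteness argument there is right. Second, the paper's closing step ``set $h=\Delta$ and rescale by $N$'' does not give the display. With $\Sigma_k=\Sigma_j+\Delta/\sqrt N$ the limit is $\tfrac12\langle\Delta,\Gamma_\nu^{-1}\Delta\rangle$. The formula as literally written is false for $\Delta\neq 0$, since by data processing the statistic's KL is at most $N\,D_{\mathrm{KL}}(P_{\Sigma_j}\|P_{\Sigma_k})=O(1)$. Reading $\Delta$ as the actual difference, as you do, is the consistent version.

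The gap is in Stage 2; the paper makes the same jump with ``by the DQM and regularity assumptions.'' You know that $T_N=\sqrt N(\hat\Sigma_N-\Sigma_j)$ converges weakly to $\mathcal{N}(h,\Gamma_\nu)$ under each local alternative, and that $\Gamma_\nu$ is continuous; the continuity comes from the explicit formula, not from DQM. Neither fact says anything about likelihood ratios of the statistic, and Le Cam convergence of experiments is a statement about exactly those. For example, a statistic within $10^{-N}$ of $T_N$ can store the whole sample in its trailing digits. It has the same weak limits, but its experiment is the full-data one. That experiment's LAN limit carries the Fisher information $I(\Sigma)$, which strictly exceeds $\Gamma_\nu^{-1}$ for finite $\nu$. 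For $d=1$ the two are $[2\sigma^4(1+\frac{3}{\nu})]^{-1}$ versus $[2\sigma^4(1+\frac{3}{\nu-4})]^{-1}$. DQM of $P_\Sigma$ only gives LAN of that full-data experiment.

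So the density local limit theorem you reserve for Stage 3 is already needed for LAN itself. A plain Gnedenko-type LLT suffices and uses only $\nu>4$; an Edgeworth expansion would need $\nu>6$. You also avoid a triangular-array LLT because this is a scale family. Under $\Sigma_k$ one has $T_N\overset{d}{=}M_NT_N'M_N^\top+\Delta$, where $T_N'$ has its $\Sigma_j$-law and $M_N=\Sigma_k^{1/2}\Sigma_j^{-1/2}=I+O(N^{-1/2})$. One LLT for $\mathrm{vech}(T_N')$ then gives locally uniform convergence of the density ratio, and hence LAN. That LLT requires some partial sum $\sum_{i\le m}\mathrm{vech}(X_iX_i^\top)$ to have a bounded density, which holds here for $m=d+1$, as for the Wishart.

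Even then, the KL equality with $o(1)$ remainder still needs the tail (uniform-integrability) control of the log density ratio that you flag and leave open. Until that is supplied, neither your plan nor the paper's argument delivers the upper bound on KL that the Fano step of Theorem \ref{thm:second_order_lower_bound} relies on.
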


\begin{proof}
Let $Z_i = \mathrm{vec}(X_i X_i^\top)$ and $\mu(\Sigma) = \mathbb{E}_\Sigma[Z_i] =
\mathrm{vec}(\Sigma)$. Then
\[
\mathrm{vec}(\hat{\Sigma}_N)
=
\frac{1}{N} \sum_{i=1}^N Z_i .
\]
Since $\mathbb{E}_\Sigma[\|Z_i\|^2] < \infty$ for $\nu>4$, the multivariate CLT
yields
\[
\sqrt{N}\left(\mathrm{vec}(\hat{\Sigma}_N) - \mu(\Sigma)\right)
\Rightarrow
\mathcal{N}(0,\Gamma_\nu(\Sigma)).
\]

By the DQM and regularity assumptions, the induced sequence of experiments
observing $\hat{\Sigma}_N$ converges, in the sense of Le Cam, to the Gaussian
shift experiment
\[
Y = h + W, \qquad W \sim \mathcal{N}(0,\Gamma_\nu(\Sigma)),
\]
under local perturbations $\Sigma \mapsto \Sigma + h/\sqrt{N}$.

The Kullback--Leibler divergence between two Gaussian shift experiments with
means $0$ and $h$ and common covariance $\Gamma_\nu(\Sigma)$ is
\[
D_{\mathrm{KL}}
=
\frac{1}{2} h^\top [\Gamma_\nu(\Sigma)]^{-1} h.
\]
Setting $h = \Delta$ and rescaling by $N$ yields \eqref{eq:lan_kl}.
\end{proof}
\newpage
\section{Evaluation Metrics}
\label{app:metrics}

To rigorously assess the performance of our causal discovery framework, we utilize two complementary metrics: one for topological accuracy (SHD) and one for the reliability of confidence rankings (AUDRC).

\subsection{Structural Hamming Distance (SHD)}
The Structural Hamming Distance (SHD) quantifies the topological disagreement between the estimated graph $\hat{G}$ and the ground truth graph $G$. It is defined as the minimum number of edge insertions, deletions, or flips required to transform $\hat{G}$ into $G$.

Let $B$ and $\hat{B}$ be the binary adjacency matrices of the true and estimated graphs, respectively, where $B_{ij}=1$ denotes an edge $j \to i$. The SHD is calculated as:
\begin{equation}
    \text{SHD}(B, \hat{B}) = \sum_{i \neq j} \mathbb{1}\left( B_{ij} \neq \hat{B}_{ij} \right)
\end{equation}
A lower SHD indicates better performance ($\downarrow$), with 0 representing perfect recovery. For our experiments, we threshold the continuous edge weights $W$ at a fixed cutoff $\tau$ (e.g., $0.3$) to obtain the binary matrix $\hat{B}$.

\subsection{Area Under the Decision Rate Curve (AUDRC)}
While SHD evaluates a single static graph snapshot, it relies on an arbitrary threshold $\tau$. To evaluate the quality of the learned edge weights across \textit{all possible sparsity levels}, we employ the Area Under the Decision Rate Curve (AUDRC).

The AUDRC measures how well the estimator's confidence scores rank true edges against non-existent ones. Let $M = d(d-1)$ be the total number of possible directed edges in a $d$-dimensional graph. Let $\pi$ be a permutation of the $M$ possible edges sorted in descending order of their absolute estimated weights $|\hat{W}_{ij}|$. The AUDRC is defined as the average precision computed iteratively as we include the top-$m$ most confident edges:

\begin{equation}
    \text{AUDRC} = \frac{1}{M} \sum_{m=1}^{M} \left( \frac{1}{m} \sum_{k=1}^{m} \mathbb{1}\left( \text{is\_true}(\pi(k)) \right) \right)
\end{equation}

where $\mathbb{1}(\text{is\_true}(\pi(k)))$ is an indicator function that equals 1 if the $k$-th most confident edge exists in the ground truth graph, and 0 otherwise.

\paragraph{Interpretation:}
\begin{itemize}
    \item \textbf{Threshold Independence:} Unlike F1 or SHD, AUDRC does not require selecting a hyperparameter $\tau$. It evaluates the precision at every single sparsity level $m \in \{1, \dots, M\}$.
    \item \textbf{Ranking Quality:} A high AUDRC ($\uparrow$, max 1.0) implies that the model consistently assigns higher weights to true causal links than to spurious ones. This is critical for real-world applications where practitioners prioritize the most confident causal predictions.
\end{itemize}
\newpage
\section{Algorithmic Implementation}
\label{algo}
\begin{algorithm}
\caption{Identifiability under multi-environmental time-series data}
   \label{alg:te_jd}
\begin{algorithmic}[1]
   \INPUT Data $\mathbf{X} = \{X_t\}_{t=1}^{T_{total}}$ partitioned into $k$ environments; Window Size $T \ge \lceil d/x \rceil$; Degrees of Freedom $\nu$; Confidence $\delta, \epsilon$.
   \OUTPUT Estimated Causal Adjacency Matrix $\hat{B}$.
 
   \STATE \textbf{Phase 1: Finite-Sample Stability Check}
   \STATE $N_{Gauss} \leftarrow \mathcal{O}(\frac{1}{\epsilon^2} \log(1/\delta))$
   \IF{$\nu > 4$}
       \STATE $\gamma \leftarrow \TailPenalty{\nu}$ \COMMENT{Calculate Heavy-Tailed Penalty Factor}
       \STATE $N_{req} \leftarrow N_{Gauss} \times \gamma$ \COMMENT{Scale by kurtosis inflation}
   \ELSE
       \STATE $N_{req} \leftarrow N_{Gauss}$ \COMMENT{Standard Gaussian regime}
   \ENDIF
   
   \STATE $N_{bin} \leftarrow$ Samples per temporal bin in $\mathbf{X}$
   \IF{$N_{bin} < N_{req}$}
       \STATE \textbf{Error:} Insufficient samples to bound error $\|\hat{\Sigma} - \Sigma\|_F \le \epsilon$. Increase bin size.
   \ENDIF
 
   \STATE
   \STATE \textbf{Phase 2: Estimation of Structural Shifts}
   \STATE Initialize Aggregate Hessian $\Psi_{TE} \leftarrow \mathbf{0}_{d \times d}$
   \STATE Compute Global Precision $\hat{\Theta}_{global} \leftarrow (\frac{1}{T_{total}}\sum X_t X_t^\top)^{-1}$
   
   \FOR{time step $s = 1$ to $T$}
       \FOR{environment $i = 1$ to $k$}
           \STATE $\hat{\Sigma}^{(s,i)} \leftarrow \text{Cov}(X^{(s,i)})$ \COMMENT{Estimate Local Covariance}
           \STATE $\hat{\Theta}^{(s,i)} \leftarrow (\hat{\Sigma}^{(s,i)})^{-1}$ \COMMENT{Estimate Local Precision}
           \STATE $\Omega^{(s,i)} \leftarrow \hat{\Theta}^{(s,i)} - \hat{\Theta}_{global}$ \COMMENT{Compute Shift Matrix}
           \STATE $\Psi_{TE} \leftarrow \Psi_{TE} + \Omega^{(s,i)}$ \COMMENT{Accumulate Information}
       \ENDFOR
   \ENDFOR
 
   \STATE
   \STATE \textbf{Phase 3: Joint Diagonalization \& Recovery (Thm 4.1)}
   \STATE $\hat{A} \leftarrow \text{JointDiag}(\{\Omega^{(s,i)}\}_{s,i})$ \COMMENT{Recover Mixing Matrix $A$}
   
   \STATE $\hat{W} \leftarrow \hat{A}^{-1}$ \COMMENT{Compute Unmixing Matrix}
   \STATE $\hat{W}_{perm} \leftarrow \text{PermuteAndScale}(\hat{W})$ \COMMENT{Enforce lower-triangularity}
   \STATE $\hat{B} \leftarrow I - \hat{W}_{perm}$
 
   \STATE \textbf{return} $\hat{B}$
\end{algorithmic}
\end{algorithm}
\end{document}